\documentclass{article}

 \usepackage[preprint]{neurips_2026}

\usepackage[utf8]{inputenc} 
\usepackage[T1]{fontenc}    
\usepackage{hyperref}       
\usepackage{url}            
\usepackage{booktabs}       
\usepackage{amsfonts}       
\usepackage{nicefrac}       
\usepackage{microtype}      
\usepackage{xcolor}         

\usepackage{amsmath}
\usepackage{amssymb}
\usepackage{mathtools}
\usepackage{amsthm}
\newtheorem{lemma}{Lemma}
\newtheorem{proposition}{Proposition}
\usepackage{multirow}
\usepackage{graphicx}
\usepackage{subcaption}
\usepackage{algorithm}
\usepackage{algorithmic}
\usepackage{wrapfig}
\title{Towards Adaptive Continual Model Merging via Manifold-Aware Expert Evolution}

\author{%
  Haiyun Qiu$^1$, Xingyu Wu$^{1,*}$, Kay Chen Tan$^1$ \\
  \\
  $^1$Department of Data Science and Artificial Intelligence, \\
  The Hong Kong Polytechnic University, Hong Kong, China \\
  \texttt{haiyun.qiu@connect.polyu.hk}, \texttt{xingy.wu@polyu.edu.hk}, \texttt{kaychen.tan@polyu.edu.hk}
}

\begin{document}

\maketitle

\begin{abstract}
     Continual Model Merging (CMM) sequentially integrates task-specific models into a unified architecture without intensive retraining. However, existing CMM methods are hindered by a fundamental saturation-redundancy dilemma: backbone-centric approaches face parameter saturation and representation interference within fixed capacities, whereas Mixture-of-Experts (MoE) variants resort to indiscriminate expansion, incurring expert redundancy and a routing bottleneck reliant on additional data-driven optimization. To resolve these challenges, we propose MADE-IT (\underline{M}anifold-\underline{A}ware \underline{D}ynamic Expert \underline{E}volution and \underline{I}mplicit rou\underline{T}ing), an adaptive CMM method that orchestrates expert management and activation by grounding intrinsic expert representations in manifold geometry. We introduce a projection-based subspace affinity metric coupled with a distribution-aware adaptive threshold mechanism to guide autonomous expert evolution, harmonizing diversity with architectural parsimony. Furthermore, to bypass parameterized gating networks, we design a data-free and training-free implicit routing mechanism that activates experts via feature-subspace alignment. Extensive experiments demonstrate that MADE-IT consistently outperforms strong baselines in accuracy and robustness across long-horizon and shuffled task sequences, while significantly pruning redundant experts, particularly within generic modules and early layers.
\end{abstract}

\section{Introduction}

While fine-tuning pre-trained models is standard for task adaptation \citep{wolf2019huggingface,dodge2020fine}, the proliferation of independent checkpoints imposes prohibitive storage and deployment overheads \citep{mcmahan2017communication,huang2024emr}. Model merging offers a training-free alternative by directly integrating multiple task-specific models into a unified, versatile model \citep{yang2026model,yadavsurvey,li2025deep,ilharco2023editing,yadav2023ties,du2024parameter,zhou2024hm3}. However, conventional merging methods assume simultaneous model availability, rendering them ineffective in streaming scenarios where tasks and models arrive sequentially. Consequently, continual model merging (CMM) has emerged to enable sequential integration under a bounded memory footprint \citep{tang2025merging,qiu2025mingle,yang2025continual}.

Existing CMM methods face a fundamental saturation-redundancy dilemma. On one hand, integrating task-specific parameters into a fixed-capacity backbone via weighted averaging or orthogonal projection inevitably triggers parameter saturation and catastrophic forgetting \citep{tang2025merging,yang2025continual,qiu2025null}. Conversely, Mixture-of-Experts (MoE) variants mitigate task interference by allocating dedicated experts for each task \citep{cai2025survey,qiu2025mingle}. However, such indiscriminate expansion overlooks inter-expert commonalities, causing expert redundancy and architectural bloat. Resolving this dilemma requires adaptively retaining critical experts while pruning redundant ones during merging. Nevertheless, quantifying intrinsic inter-expert correlations to orchestrate such autonomous evolution remains highly non-trivial.

Beyond architectural challenges, MoE-based merging methods are further constrained by an explicit routing bottleneck during inference. Typically, they rely on parameterized gating networks to activate experts \citep{lu2024twin,shen2026efficient,qiu2025mingle}. However, these auxiliary networks inevitably introduce extra learnable weights and necessitate data-driven optimization to sustain routing accuracy. In the context of continual merging, frequent network tuning and inherent data dependency restrict their adaptability in strictly data-free and training-free scenarios. Consequently, a pressing objective is to propose an expert activation mechanism that efficiently captures the intrinsic alignment between input features and experts, bypassing the need for external routing networks.

Resolving these intertwined challenges necessitates extracting intrinsic expert representations to quantify correlations among experts and their alignment with input features. However, parameter-space metrics such as cosine similarity are inadequate due to the inherent permutation invariance and basis rotation sensitivity of deep neural networks \citep{entezari2022role,ainsworth2023git,kim2023warping}. Functionally equivalent experts may manifest as completely disparate parameter configurations, rendering redundancy identification and alignment evaluation intractable. Therefore, we shift from raw parameters to their underlying subspace structures. Grounded in manifold geometry \citep{edelman1998geometry,kornblith2019similarity}, we extract transformation-invariant expert representations. This geometric perspective bridges two critical operations: measuring inter-expert similarities for dynamic evolution and evaluating feature-subspace alignment for adaptive activation.  

Synthesizing these geometric insights, we propose MADE-IT (\underline{M}anifold-\underline{A}ware \underline{D}ynamic Expert \underline{E}volution and \underline{I}mplicit rou\underline{T}ing) for efficient expert management and activation in CMM. To address the saturation-redundancy dilemma, MADE-IT introduces a manifold-aware dynamic expert evolution strategy that considers both expert diversity and architectural parsimony. We extract expert principal subspaces via singular value decomposition (SVD) and characterize them as unique points on the Grassmann manifold. We then define a projection-based subspace affinity metric to quantify geometric similarities between candidate and existing experts. Guided by a distribution-aware threshold adaptation mechanism, MADE-IT autonomously orchestrates the expert population by deciding whether to create new experts for specificity or consolidate redundant ones via subspace merging. Furthermore, to address the routing bottleneck, we design a data-free, training-free implicit routing mechanism. This mechanism determines the optimal activation path by evaluating the projection alignment of input features onto expert principal subspaces while leveraging the hierarchical dependencies among modular experts. Our main contributions are summarized as follows:

\begin{itemize}
    \item We reveal the inadequacies of parameter-space metrics in quantifying the geometric correlations underlying expert redundancy and feature matching. In response, we introduce manifold geometry as a robust foundation for characterizing intrinsic expert representations.
    \item We propose a manifold-aware dynamic expert evolution strategy that enables the adaptive, differentiated management of modular expert populations based on subspace affinity, achieving a dynamic balance between expert diversity and architectural parsimony.
    \item We design a data-free and training-free implicit routing mechanism that exploits the geometric alignment between input features and expert principal subspaces, facilitating efficient expert activation without auxiliary gating networks.
    \item Extensive experiments demonstrate that MADE-IT achieves state-of-the-art performance, consistently outperforming strong baselines in accuracy and robustness while significantly reducing expert redundancy, particularly in generic modules and early layers.
\end{itemize}

\section{Background and Motivation}

\subsection{Problem Definition}
Consider a pre-trained backbone parameterized by $\theta^{(0)}$ and a sequence of task-specific models $\{\theta^{(t)}\}_{t=1}^{T}$ fine-tuned on associated datasets $\{\mathcal{D}_t\}_{t=1}^{T}$. The task vector for task $t$ is defined as $\tau^{(t)} = \theta^{(t)}-\theta^{(0)}$ \citep{ilharco2023editing}. Conventional merging performs one-shot merging of all models via $\theta_{\mathrm{merged}} = \mathcal{F}(\theta^{(0)}; \theta^{(1)}, \dots, \theta^{(T)})$, where $\mathcal{F}$ denotes a merging operator. In contrast, we focus on the continual merging scenario where task-specific models arrive sequentially. At step $t$, we integrate the incoming model $\theta^{(t)}$ with the current merged model $\theta_{\mathrm{merged}}^{(t-1)}$ to construct an updated model:
\begin{equation}
    \theta_{\mathrm{merged}}^{(t)}=\mathcal{F}(\theta^{(0)};\theta_{\mathrm{merged}}^{(t-1)},\theta^{(t)}),\mathrm{~}t\geq2.
\end{equation}

Typically, $\theta_{\mathrm{merged}}^{(1)}$ is initialized as the first arriving model $\theta^{(1)}$. Our objective is to sequentially merge incoming models in a data-free and training-free manner, while mitigating catastrophic forgetting.

\subsection{Related Work}
\noindent\textbf{Model Merging.}
Conventional model merging has evolved from simple weight averaging \citep{wortsman2022model,ainsworth2023git} to sophisticated techniques involving importance-based reweighting \citep{matena2022merging,jindataless}, pre-merging sparsification \citep{ilharco2023editing,yadav2023ties,du2024parameter}, and dynamic integration \citep{yangadamerging,lu2024twin,huang2024emr,shen2026efficient}. However, these approaches rely on simultaneous model availability and fail in streaming environments. Consequently, continual model merging has emerged to integrate sequentially arriving models while mitigating catastrophic forgetting \citep{tang2025merging,qiu2025mingle,yang2025continual,qiu2025null}. Pioneering CMM approaches employ orthogonal projection to reduce interference but inevitably dilute task-specific knowledge over time due to capacity constraints \citep{tang2025merging,yang2025continual}. While the recent MoE-based variant MINGLE \citep{qiu2025mingle} alleviates this bottleneck through dedicated expert allocation, it introduces redundant experts and relies on data-driven optimization of parameterized gating networks.

\noindent\textbf{Continual Learning.}
Continual learning aims to mitigate catastrophic forgetting in sequential task learning \citep{mccloskey1989catastrophic} through memory replay \citep{wang2022dualprompt,smith2023coda}, regularization constraints \citep{jung2020continual,wu2024meta}, and dynamic architectural expansion \citep{zhou2024expandable,marouf2024weighted}. However, these approaches demand frequent access to raw data and computationally intensive joint training. In contrast, CMM offers an efficient parameter-centric paradigm \citep{tang2025merging,qiu2025mingle}. By sequentially merging incoming task-specific models, CMM inherently circumvents the privacy risks associated with data sharing and significantly enhances the scalability and plasticity of foundation architectures.

\subsection{Challenges and Motivation}

\noindent\textbf{The Saturation-Redundancy Dilemma.}
Existing CMM methods either suffer from capacity saturation within a bounded parameter space or incur redundancy through unconstrained expert expansion. For instance, the backbone-centric approach OPCM \citep{tang2025merging} incrementally integrates task vectors $\tau^{(t)}$ into a fixed-capacity backbone by projecting them onto the orthogonal complement of historical updates via a projection mapping $\mathcal{P}_{\alpha}^{(t-1)}$ and a scaling factor $\lambda^{(t)}$:
\begin{equation}
\label{OPCM}
    \theta_{\mathrm{merged}}^{(t)}=\theta^{(0)} + \frac{\lambda^{(t-1)}\tau_{\mathrm{merged}}^{(t-1)}+\mathcal{P}_{\alpha}^{(t-1)}\tau^{(t)}}{\lambda^{(t)}}.
\end{equation}
While ensuring architectural consistency, backbone-centric approaches inevitably trigger parameter saturation and representation conflicts, exacerbating catastrophic forgetting \citep{yang2025continual,qiu2025null}. Conversely, the MoE-based approach MINGLE \citep{qiu2025mingle} circumvents interference by assigning a dedicated expert $f_i$ and an input-dependent gate $g_i$ to each sequential task. For a given input $X$, the output is formulated as:
\begin{equation}
\label{MINGLE}
    \theta^{(t)}_{\mathrm{merged}}(X)=\theta^{(0)}(X)+\sum_{i=1}^t{g_i(X)\cdot f_i(X)}.
\end{equation}
While avoiding interference, such unconditional expansion overlooks intrinsic commonalities among experts. Consequently, the expert population scales linearly with the model stream, yielding significant expert redundancy and architectural bloat. To solve this dilemma, we develop an adaptive, dynamic expert evolution strategy that harmonizes expert diversity with architectural parsimony. 

\noindent\textbf{The Bottleneck of Explicit Routing.}
MoE-based merging is further constrained by an explicit routing bottleneck. As defined in Eq.~\ref{MINGLE}, expert contributions are modulated by parameterized gating networks, typically formulated as linear projections with trainable parameters $W_t^{g}$ and $b_t^{g}$:
\begin{equation}
\label{gating}
    g_t(X) = W_t^{g\top}X + b_t^{g}.
\end{equation}
Crucially, these gating networks require continuous data access and frequent network tuning to ensure routing accuracy, limiting the applicability and flexibility of MoE-based CMM methods in strictly data-free and training-free scenarios. To bypass this bottleneck, we design an implicit routing mechanism for efficient expert activation without resorting to auxiliary parameterized networks.

\section{Method}

Motivated by the above, we propose MADE-IT for adaptive CMM. As illustrated in Fig.~\ref{fig:main-framework}, we first resolve the saturation-redundancy dilemma via a manifold-aware dynamic expert evolution strategy (Section~\ref{sec:Expert Evolution}). By representing experts via their principal subspaces on the Grassmann manifold, this strategy utilizes geometric affinity to automatically consolidate redundant experts while isolating specialized ones. Second, to circumvent the routing bottleneck, we design an implicit routing mechanism (Section~\ref{sec:Implicit Routing}). By exploiting geometric feature-subspace alignment, this mechanism facilitates efficient expert activation, bypassing the need for parameterized routing networks. Due to page limitations, the detailed algorithm of MADE-IT is outlined in Appendix \ref{ap:algorithm}.

\begin{figure*}[htbp]
    \centering
    \includegraphics[width=\textwidth]{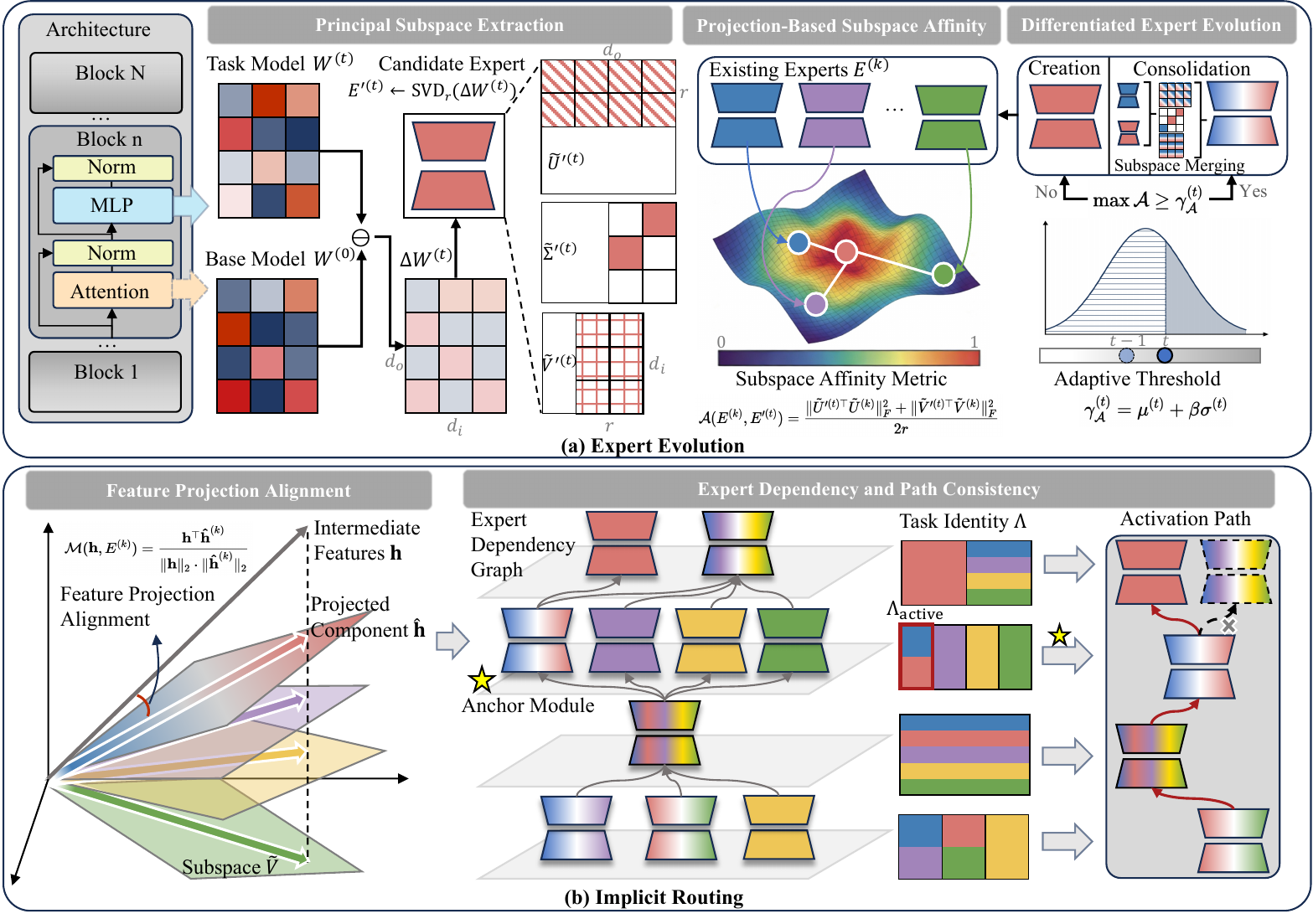}
    \caption{Overview of MADE-IT. (a) Expert Evolution: Expert principal subspaces are extracted from the module-wise weight updates via truncated SVD. Projection-based subspace affinity and adaptive thresholds guide the autonomous consolidation or creation of modular experts. (b) Implicit Routing: During inference, expert responsiveness is evaluated using intermediate features via feature projection alignment. Anchored by the highest-diversity module, the task-identity constraint is propagated across the expert dependency graph to resolve bifurcations and navigate a valid activation path.}
    \label{fig:main-framework}
\end{figure*}

\subsection{Manifold-Aware Dynamic Expert Evolution}
\label{sec:Expert Evolution}

Grounded in the low-rank nature of fine-tuned model updates \citep{hulora,gargiulo2025task}, our dynamic expert evolution strategy isolates essential knowledge increments within principal subspaces. Geometrically, characterizing these subspaces as unique points on the Grassmann manifold \citep{edelman1998geometry,hamm2008grassmann} yields compact, high-fidelity expert representations.

\subsubsection{Principal Subspace Extraction}
Given an incoming fine-tuned model for task $t$, let $\Delta W_c^{(t)} = W_c^{(t)} - W_c^{(0)} \in \mathbb{R}^{d_{o} \times d_{i}}$ denote the weight update of module $c$ relative to its pre-trained counterpart. Leveraging the low-rank prior of $\Delta W_c^{(t)}$, we apply truncated SVD to extract its principal directions. Specifically, we set a truncation rank $r = \lfloor \rho \cdot \min(d_{o}, d_{i}) \rfloor$ controlled by a ratio $\rho \in (0,1]$, and construct a lightweight modular expert via a rank-$r$ approximation:

\begin{equation}
\label{equation:E}
    E_c^{(t)} = \tilde{U}_c^{(t)} \tilde{\Sigma}_c^{(t)} \tilde{V}_c^{(t) \top},
\end{equation}
where $\tilde{U}_c^{(t)} \in \mathbb{R}^{d_{o} \times r}$ and $\tilde{V}_c^{(t)} \in \mathbb{R}^{d_{i} \times r}$ are the top-$r$ left and right singular vectors, while $\tilde{\Sigma}_c^{(t)} \in \mathbb{R}^{r \times r}$ contains the corresponding singular values. The columns of $\tilde{U}_c^{(t)}$ and $\tilde{V}_c^{(t)}$ form orthonormal bases for the principal output subspace $\mathcal{U}_c^{(t)}$ and input subspace $\mathcal{V}_c^{(t)}$, respectively. Geometrically, these subspaces correspond to unique points on the Grassmann manifolds $\mathcal{G}(r,d_o)$ and $\mathcal{G}(r,d_i)$. This decomposition effectively projects expert knowledge from the ambient parameter space onto these compact manifolds, enabling efficient storage and manipulation.

\subsubsection{Projection-Based Subspace Affinity}
As experts may vary in specificity and commonality across modules with different depths and types, we maintain an independently evolving expert set for each module $c$, denoted by $\mathbf{E}_c=\{E_c^{(k)}\}_{k=1}^{|\mathbf{E}_c|}$. To manage the expert population, we design a projection-based subspace affinity metric that is intrinsic to subspace geometry and invariant to basis selection.

\noindent\textbf{Projection Operators and Affinity Metric.}
For a reference expert $E_{c}^{(k)} \in \mathbf{E}_{c}$ and a new expert $E_c^{\prime(t)}$, let $\tilde{S}_c^{(k)}$ and $\tilde{S}_c^{\prime(t)}$ denote orthonormal bases for a target subspace $\mathcal{S} \in \{\mathcal{U}, \mathcal{V}\}$. The associated orthogonal projection operators are $\mathbf{P}_{\mathcal{S}^{(k)}} = \tilde{S}_c^{(k)}\tilde{S}_c^{(k)\top}$ and $\mathbf{P}_{\mathcal{S}^{\prime(t)}} = \tilde{S}_c^{\prime(t)}\tilde{S}_c^{\prime(t)\top}$, which uniquely characterize these subspaces and remain invariant to orthogonal rotations of the basis.

Then, we quantify the correlation between subspaces using the Hilbert-Schmidt inner product of their projection operators, $\langle \mathbf{P}_{\mathcal{S}^{(k)}}, \mathbf{P}_{\mathcal{S}^{\prime(t)}} \rangle_{HS} = \text{Tr}(\mathbf{P}_{\mathcal{S}^{(k)}} \mathbf{P}_{\mathcal{S}^{\prime(t)}})$, which measures subspace overlap. To ensure computational efficiency, we leverage the trace-Frobenius equivalence, where the Hilbert-Schmidt inner product of two projection operators is equivalent to the squared Frobenius norm of the inner product between their compact basis matrices (see Appendix \ref{ap:theory} for detailed derivation):
\begin{equation}
    \text{Tr}(\mathbf{P}_{\mathcal{S}^{(k)}} \mathbf{P}_{\mathcal{S}^{\prime(t)}}) = \| \tilde{S}_c^{\prime(t)\top} \tilde{S}_c^{(k)} \|_F^2.
\end{equation}
This equivalence obviates the explicit construction of the dense $d \times d$ projection matrices, reducing the computational cost to an $r \times r$ interaction. Finally, we normalize and aggregate the affinities of input and output subspaces to derive the final metric $\mathcal{A} \in [0, 1]$:
\begin{equation}
\label{equation:A}
    \mathcal{A}(E_{c}^{(k)}, E_c^{\prime(t)}) = \frac{\| \tilde{U}_c^{\prime(t)\top} \tilde{U}_c^{(k)} \|_F^2 + \| \tilde{V}_c^{\prime(t)\top} \tilde{V}_c^{(k)} \|_F^2}{2r}.
\end{equation}
Here, $\mathcal{A}=0$ denotes maximal subspace orthogonality, reflecting high specificity of the new candidate expert, while $\mathcal{A}=1$ indicates perfect alignment and full expert redundancy.

\noindent\textbf{Geometric Interpretation.}
We further analyze the geometric properties of the normalized subspace affinity for a single subspace type, defined as $\mathcal{A}_\mathcal{S} = \frac{1}{r} \| \tilde{S}_1^\top \tilde{S}_2 \|_F^2$, where $\tilde{S}_1$ and $\tilde{S}_2$ represent the corresponding orthonormal bases. Spectrally, the relative geometry of two Grassmannian subspaces is characterized by their principal angles $\{\phi_i\}_{i=1}^r$. Since the singular values of the interaction matrix $\tilde{S}_1^\top \tilde{S}_2$ precisely yield $\{\cos \phi_i\}_{i=1}^r$, our metric quantifies the spectral mean of the squared cosines of these principal angles, formulated as $\mathcal{A}_\mathcal{S} = \frac{1}{r} \sum_{i=1}^r \cos^2 \phi_i$. Crucially, $\mathcal{A}_\mathcal{S}$ efficiently aggregates alignment information across all principal directions via the Frobenius norm, bypassing the additional SVD calculation required for the full angle spectrum. Furthermore, we establish a linear mapping between $\mathcal{A}_\mathcal{S}$ and the standard Chordal distance $d_C=\frac{1}{\sqrt{2}}\|\tilde{S}_1 \tilde{S}_1^\top-\tilde{S}_2 \tilde{S}_2^\top\|_F$ on the manifold, yielding $\mathcal{A}_\mathcal{S}=1-\frac{1}{r}d_C^2$. Unlike the raw chordal distance $d_C \in [0, \sqrt{r}]$ which scales with the rank $r$, $\mathcal{A}_\mathcal{S}$ provides a scale-invariant measure normalized within $[0, 1]$, ensuring a robust and consistent decision boundary across heterogeneous network modules with varying ranks. Finally, $\mathcal{A}_\mathcal{S}$ preserves basis invariance and symmetry. For any orthogonal rotation $Q \in \mathcal{O}(r)$, the relation $\|\tilde{S}_1^\top (\tilde{S}_2 Q)\|_F^2 = \|\tilde{S}_1^\top \tilde{S}_2\|_F^2$ holds, confirming that the measure captures the intrinsic geometry rather than basis artifacts. Concurrently, the cyclic property of the trace ensures geometric symmetry, preventing evaluation bias from the sequential task arrival order.

\subsubsection{Differentiated Expert Evolution}
To accommodate expert correlations as tasks arrive, we employ a distribution-aware adaptation mechanism instead of a fixed threshold. At each step $t$, an adaptive threshold $\gamma_{\mathcal{A}}^{(t)} = \mu^{(t)} + \beta\sigma^{(t)}$ is calibrated for merging decisions based on the historical statistics of affinity scores, where $\mu^{(t)}$ and $\sigma^{(t)}$ denote the mean and standard deviation of the collected scores, and $\beta$ serves as a margin coefficient. Guided by the adaptive thresholds, the modular expert population evolves via two criteria: 

\noindent\textbf{Consolidation.} If $\max_{1\leq k\leq|\mathbf{E}_c|}\mathcal{A}(E_c^{(k)},E_c^{\prime(t)}) \geq \gamma_{\mathcal{A}}^{(t)}$, indicating high functional overlap between the candidate and the most similar expert $E_c^{(k^*)}$, where $k^* = \operatorname{arg\,max}_k \mathcal{A}(E_{c}^{(k)}, E_c^{\prime(t)})$, we mitigate expert redundancy via subspace merging \citep{gargiulo2025task}. We concatenate the bases as ${U}_c = [\tilde{U}_c^{\prime(t)}, \tilde{U}_c^{(k^*)}]$ and ${V}_c = [\tilde{V}_c^{\prime(t)}, \tilde{V}_c^{(k^*)}]$, and form the block-diagonal matrix ${\Sigma}_c = \operatorname{diag}(\tilde{\Sigma}_c^{\prime(t)}, \tilde{\Sigma}_c^{(k^*)})$. After resolving non-orthogonality via polar decomposition to obtain $\hat{U}_c$ and $\hat{V}_c$, the updated expert $E_c^{(k^*)}$ is finally extracted from the top-$r$ singular components of the reconstructed form $\hat{U}_c {\Sigma}_c \hat{V}_c^{\top}$.

\noindent\textbf{Creation.} Otherwise, it implies that the new candidate expert occupies a specific subspace that is not effectively covered by existing experts. Thus, we integrate the new expert into the architecture to preserve diverse expertise and specialization. 

\subsection{Implicit Routing for Expert Activation}
\label{sec:Implicit Routing}
This mechanism is based on our key observations of the geometric alignment between input features and expert principal subspaces, and the hierarchical dependencies among evolved modular experts.

\subsubsection{Feature Projection Alignment}
During inference, to efficiently activate the optimal expert from the current set $\mathbf{E}_c$ at each module, we extract the intermediate activations $\mathbf{h}_c$ from the pre-trained backbone through the forward pass, which serve as a shared input to evaluate the responsiveness of all experts.

We quantify the compatibility between $\mathbf{h}_c$ and an expert $E_{c}^{(k)}$ by focusing on its input-sensitive directions and measuring the match through feature projection alignment (FPA). We first project $\mathbf{h}_c$ onto the principal input subspace of $E_{c}^{(k)}$ using the orthogonal projection operator $\mathbf{P}_{\tilde{V}_c^{(k)}} = \tilde{V}_c^{(k)}\tilde{V}_c^{(k)\top}$, yielding the projected component ${\hat{\mathbf{h}}}_c^{(k)} = \mathbf{P}_{\tilde{V}_c^{(k)}}\mathbf{h}_c$. The FPA score $\mathcal{M}$ is then defined as the cosine similarity between the original input and its projection:
\begin{equation}
    \mathcal{M}(\mathbf{h}_c, E_{c}^{(k)}) = \frac{\mathbf{h}_c^\top {\hat{\mathbf{h}}}_c^{(k)}}{\|\mathbf{h}_c\|_2 \cdot \|{\hat{\mathbf{h}}}_c^{(k)}\|_2}.
    \label{eq:fpa}
\end{equation}
A larger $\mathcal{M} \in [0,1]$ prioritizes experts whose operational domain aligns most closely with the input signal, ensuring that selected experts induce meaningful feature updates.

\subsubsection{Expert Dependency and Path Consistency}
Sole reliance on local expert selection may lead to semantically inconsistent paths across modules. To enforce global coherence, we construct an expert dependency graph $G=(\mathcal{V},\mathcal{L})$, where $\mathcal{V}$ represents the set of all modular experts, and $\mathcal{L}$ denotes valid expert dependency links between adjacent modules. Let $\Lambda(E_{c}^{(k)})$ denote the set of source task identities associated with expert $E_{c}^{(k)}$. A vertical dependency edge $(E_{c}^{(k)}, E_{c+1}^{(j)}) \in \mathcal{L}$ connects experts in adjacent modules if and only if they share a common task origin, i.e., $\Lambda(E_{c}^{(k)}) \cap \Lambda(E_{c+1}^{(j)}) \neq \emptyset$. 

Based on ${G}$, we design an implicit routing strategy to navigate a semantically valid path in a data-free and training-free manner. The process begins by designating the module with the highest expert diversity as the primary anchor $c^{\ast}$, where the optimal expert $E_{c^\ast}^\ast$ with the maximum FPA score (Eq.~\ref{eq:fpa}) is activated. This decision initializes the active source set as a global constraint $\Lambda_{\mathrm{active}} \leftarrow \Lambda(E_{c^\ast}^\ast)$, which is then propagated to other modules along the graph edges. During this traversal, we prune inconsistent branches by retaining only those candidate experts that maintain vertical consistency with $\Lambda_{\mathrm{active}}$, meaning that their source set intersection is non-empty. In cases of pathway bifurcation, where multiple experts in a module $c$ satisfy the propagated constraint, the module is treated as a new anchor. We resolve the ambiguity by selecting the expert with the highest FPA score and refining the constraint $\Lambda_{\mathrm{active}} \leftarrow \Lambda_{\mathrm{active}} \cap \Lambda(E_{c}^\ast)$. This cycle of constraint propagation and conflict resolution iterates recursively until a unique and globally consistent path is established.

\section{Experiments}
\subsection{Experiment Setup}
\label{exp:setup}
\noindent\textbf{Models and Datasets.}
Following \citep{tang2025merging}, we employ CLIP-ViT \citep{radford2021learning} backbones and construct 8, 14, and 20-task groups for merging ViT-B/32, ViT-B/16, and ViT-L/14 models to evaluate scalability and robustness. For fair comparison, we adopt publicly available fine-tuned checkpoints \citep{tang2024fusionbench}. Detailed settings are provided in Appendix \ref{ap:setup}.

\noindent\textbf{Evaluation Metrics.}
We adopt average accuracy (ACC) and backward transfer (BWT) \citep{lin2022beyond} for evaluation. ACC is the average accuracy across all tasks: $\mathrm{ACC}=\frac{1}{T}\sum_{i=1}^TA_i(\theta_\mathrm{merged}^{(T)})$. BWT quantifies forgetting through the average performance degradation after the final merge: $\mathrm{BWT}=\frac{1}{T-1}\sum_{i=1}^{T-1}[A_i(\theta_{\mathrm{merged}}^{(T)})-A_i(\theta_{\mathrm{merged}}^{(i)})]$, where $A_i(\cdot)$ denotes the accuracy on the $i$-th task. 

\noindent\textbf{Baselines.}
We compare MADE-IT with three categories of baselines: (1) Non-merging paradigms including pre-training, standard fine-tuning, and continual fine-tuning; (2) Continual adaptations of conventional merging methods including Stochastic Weight Averaging (SWA) \citep{izmailov2018averaging}, Task Arithmetic \citep{ilharco2023editing}, Ties-Merging \citep{yadav2023ties}, Layer-Wise AdaMerging \citep{yangadamerging}, and LoRA-WEMoE \citep{shen2026efficient}; and (3) Continual merging methods including OPCM \citep{tang2025merging} and MINGLE \citep{qiu2025mingle}. 

\noindent\textbf{Implementation Details.}
To evaluate task order sensitivity, we conduct 10 runs for each experiment using random seeds ranging from 42 to 51. All experiments share global hyper-parameters across model architectures and task sequences, with rank ratio $\rho=0.1$ and margin coefficient $\beta=1.0$. Sensitivity analysis is provided in Section \ref{sec:Sensitivity Analysis}.

\begin{table*}[htbp]
\footnotesize
\setlength{\tabcolsep}{1.5pt}
\caption{Comparative results of MADE-IT against three categories of baselines across three CLIP-ViT architectures. We report the average accuracy (ACC) and backward transfer (BWT) averaged over ten task orders (mean$\pm$std). Best results are in bold. 'Continual' is abbreviated as 'C.' for conciseness.}
\label{tab:main_results}
\begin{tabular}{ccccccccccc}
\hline
\multicolumn{2}{c}{\multirow{2}{*}{Method}}      & \multicolumn{3}{c}{ViT-B/32}                              & \multicolumn{3}{c}{ViT-B/16}                              & \multicolumn{3}{c}{ViT-L/14}                              \\ \cline{3-11} 
\multicolumn{2}{c}{}                             & 8 tasks           & 14 tasks          & 20 tasks          & 8 tasks           & 14 tasks          & 20 tasks          & 8 tasks           & 14 tasks          & 20 tasks          \\ \hline
                            & Pre-Trained        & 48.1              & 56.9              & 55.6              & 55.4              & 62.0              & 59.8              & 64.9              & 69.1              & 65.6              \\
                            & Fine-Tuned         & 90.4              & 89.3              & 89.8              & 92.4              & 91.3              & 91.6              & 94.3              & 93.4              & 93.5              \\
                            & C. Fine-Tuned      & 79.8              & 67.4              & 62.6              & 82.9              & 72.2              & 68.2              & 90.0                & 70.9              & 77.7              \\ \hline
\multirow{8}{*}{\rotatebox{90}{ACC (\%) ↑}} & Average (SWA)      & 66.3±0.0          & 65.4±0.0          & 61.1±0.0          & 72.3±0.0          & 69.7±0.0          & 64.8±0.0          & 80.0±0.0          & 77.5±0.0          & 71.1±0.0          \\
                            & C. Task Arithmetic & 67.5±0.0          & 66.5±0.0          & 60.0±0.0          & 77.1±0.0          & 70.9±0.6          & 64.2±0.0          & 82.1±0.0          & 77.9±0.0          & 70.3±0.0          \\
                            & C. Ties-Merging    & 49.0±10.2         & 66.2±0.6          & 59.9±0.7          & 66.8±3.7          & 70.5±0.8          & 63.0±1.6          & 64.3±7.0          & 78.0±0.6          & 68.3±0.9          \\
                            & C. LW AdaMerging   & 53.4±3.2          & 59.8±1.6          & 59.7±7.4          & 59.9±2.3          & 64.3±1.2          & 61.5±1.1          & 68.8±2.9          & 73.1±5.7          & 66.9±1.1          \\
                            & C. LoRA-WEMoE      & 68.8±7.8          & 63.8±3.4          & 49.6±15.4         & 72.6±3.7          & 67.9±2.9          & 55.0±7.0          & 75.6±7.8          & 74.0±5.0          & 56.9±19.8         \\
                            & OPCM               & 75.5±0.5          & 71.9±0.3          & 65.7±0.2          & 81.8±0.3          & 77.1±0.5          & 70.3±0.2          & 87.0±0.4          & 83.5±0.2          & 76.0±0.2          \\
                            & MINGLE             & 85.8±0.8          & 81.6±1.4          & 77.1±2.0          & 88.3±0.6          & 84.9±0.8          & 81.9±0.9          & 91.8±0.2          & 88.8±0.7          & 85.5±1.3          \\
                            & \textbf{MADE-IT (Ours)}     & \textbf{87.1}±1.6 & \textbf{84.3}±0.6 & \textbf{81.4}±1.3 & \textbf{90.4}±0.6 & \textbf{87.7}±0.4 & \textbf{83.0}±0.1 & \textbf{93.1}±0.7 & \textbf{91.7}±0.7 & \textbf{89.6}±0.7 \\ \hline
\multirow{8}{*}{\rotatebox{90}{BWT (\%) ↑}} & Average (SWA)      & -11.5±2.2         & -8.0±1.3          & -7.1±2.1          & -9.7±1.5          & -7.1±1.4          & -7.3±1.7          & -7.3±1.4          & -5.8±1.0          & -6.4±1.5          \\
                            & C. Task Arithmetic & -9.6±1.5          & -1.3±1.6          & -3.4±1.0          & -4.2±1.0          & -1.3±0.4          & -3.6±0.4          & -7.1±0.8          & -1.8±0.3          & -3.3±0.3          \\
                            & C. Ties-Merging    & -15.3±8.0         & \textbf{1.9}±0.6  & \textbf{-1.5}±0.7 & -5.5±0.4          & \textbf{1.4}±0.7  & \textbf{-1.5}±1.2 & -13.0±5.7         & -1.1±0.4          & -2.9±1.0          \\
                            & C. LW AdaMerging   & -32.5±3.6         & -24.1±1.7         & -22.7±4.3         & -27.8±2.7         & -22.1±1.4         & -21.4±1.2         & -24.3±3.3         & -19.6±1.7         & -21.7±1.1         \\
                            & C. LoRA-WEMoE      & -20.4±9.0         & -20.2±3.9         & -24.5±10.0        & -18.0±6.2         & -18.8±3.4         & -25.8±7.9         & -17.8±5.9         & -16.8±5.3         & -27.9±17.2        \\
                            & OPCM               & -6.3±1.1          & -6.0±1.0          & -7.8±1.5          & -4.8±0.7          & -5.1±1.4          & -6.3±2.2          & -2.6±1.0          & -4.3±0.7          & -6.5±1.8          \\
                            & MINGLE             & \textbf{-0.6}±0.4 & -1.1±0.3          & -2.2±0.8          & \textbf{-0.4}±0.1 & -0.9±0.1          & -1.9±0.4          & \textbf{-0.6}±0.1 & \textbf{-1.0}±0.3 & -2.6±0.9          \\
                            & \textbf{MADE-IT (Ours)}     & -2.2±1.2          & -2.6±0.6          & -3.8±3.1          & -1.2±0.6          & -1.9±0.6          & -3.0±0.9          & \textbf{-0.6}±0.4 & \textbf{-1.0}±0.5 & \textbf{-1.7}±0.9 \\ \hline
\end{tabular}
\label{table:main}
\end{table*}

\subsection{Main Results}
\label{exp:main}

As detailed in Table~\ref{table:main}, MADE-IT consistently outperforms merging baselines across all architectures and sequence lengths in ACC, while preserving competitive BWT scores. In the challenging 20-task scenario, MADE-IT achieves 81.4\% accuracy with the ViT-B/32 backbone, outperforming the strongest baseline MINGLE by a significant margin of 4.3\% and surpassing OPCM by a substantial 15.7\%. On the ViT-B/16 and the larger ViT-L/14 architectures, our method establishes new state-of-the-art accuracies, further widening the performance gap over existing techniques. Notably, MADE-IT significantly narrows the disparity between merged models and the theoretical upper bound of individually fine-tuned models. These results underscore the effectiveness of our method in achieving higher accuracy and long-horizon stability without data access or auxiliary training.

\subsection{Analysis of Manifold-Aware Expert Evolution}
\label{exp:analysis}

\noindent\textbf{Generic-to-Specific Expert Allocation.}
Fig.~\ref{fig:expert_chain} visualizes the module-wise expert allocation trajectories across 20 tasks for the ViT-L/14 architecture, revealing a distinct generic-to-specific hierarchical evolution pattern consistently observed across all architectures (see Appendix \ref{ap:results} for comprehensive visualizations and analysis). Shallower modules predominantly share a compact set of universal experts by merging redundant ones with high geometric affinity. Conversely, deeper layers transition towards expert specialization to preserve task-specific fidelity. This adaptive allocation autonomously resolves the saturation-redundancy dilemma by harmonizing shallow-layer parameter efficiency with deep-layer plasticity, closely aligning with the feature hierarchy of deep networks. 

\begin{figure*}[htbp]
    \centering
    \includegraphics[width=1.0\linewidth]{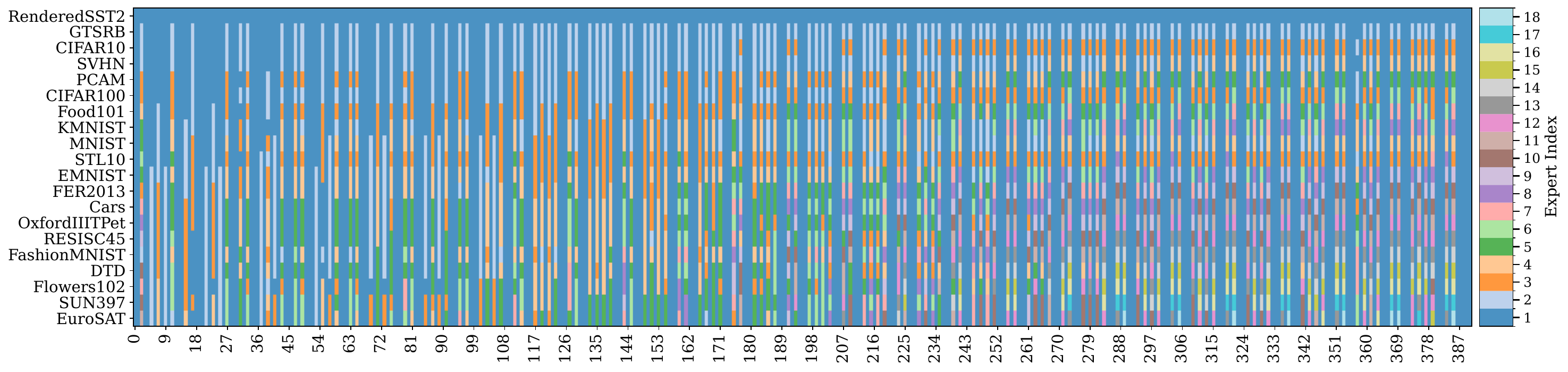}
    \caption{Module-wise visualization of dynamic expert evolution across 20 sequential tasks using ViT-L/14. In each column, consistent colors indicate shared experts among tasks within the module.}
    \label{fig:expert_chain}
\end{figure*}

\noindent\textbf{Quantitative Analysis of Expert Evolution.}
Fig.~\ref{fig:expert_analysis} provides a quantitative allocation breakdown for the ViT-B/32 architecture to assess parameter efficiency. Component-wise (Fig.~\ref{expert_analysis:sub1}), MLP modules exhibit the highest expert retention, highlighting their pivotal role in encoding task-specific knowledge, whereas generic modules achieve reduction rates near 95\%. Depth-wise (Fig.~\ref{expert_analysis:sub2}), expert retention increases toward deeper layers. This adaptive allocation confirms that MADE-IT maximizes parameter efficiency through expert sharing in generic modules and early layers, while preserving multi-task capacity via increased diversity in task-sensitive modules and deeper layers.

\begin{figure*}[htbp]
\centering
\subfloat[]{%
    \includegraphics[width=0.4\linewidth]{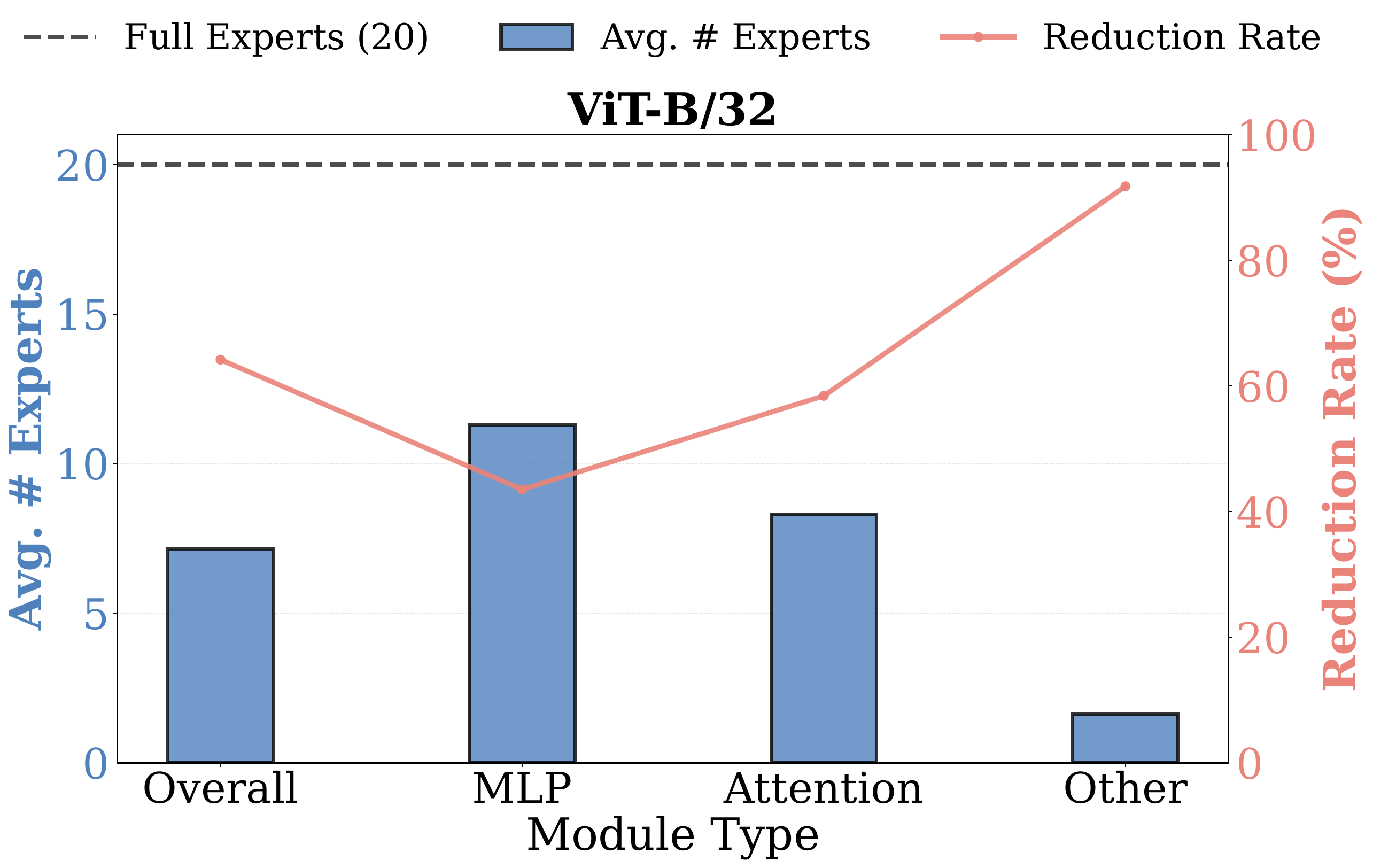}%
    \label{expert_analysis:sub1}}
\hspace{0.08\linewidth}
\subfloat[]{%
    \includegraphics[width=0.4\linewidth]{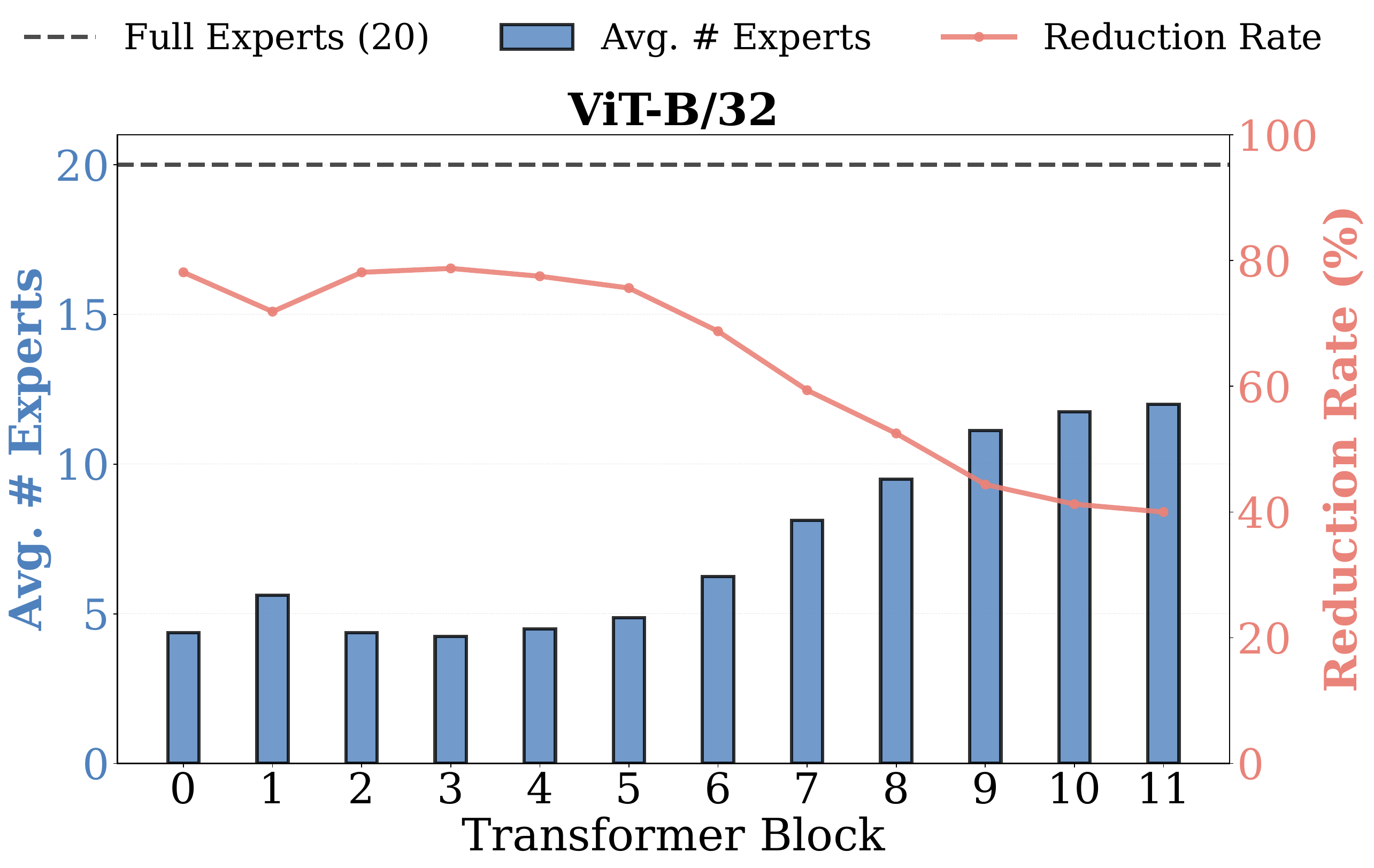}%
    \label{expert_analysis:sub2}}
\caption{Quantitative analysis of dynamic expert evolution across components and network depths.} 
\label{fig:expert_analysis}
\end{figure*}

\subsection{Analysis of Projection-Based Subspace Affinity}
\noindent\textbf{Quantitative Performance Evaluation.}
Table~\ref{similarity_metrics} demonstrates that MADE-IT using subspace affinity consistently outperforms its cosine similarity variant across all architectures, yielding significantly higher accuracy and reduced catastrophic forgetting. This robust superiority confirms that, unlike parameter-space metrics susceptible to ambient high-dimensional noise, our geometric measure effectively captures intrinsic inter-expert correlations to orchestrate optimal expert evolution.

\begin{table*}[htbp]
\small
\setlength{\tabcolsep}{1.8pt}
\caption{Quantitative comparison of similarity metrics across all architectures in the 8-task setting.}
\begin{tabular}{ccccccc}
\hline
\multirow{2}{*}{Strategy}                           & \multicolumn{2}{c}{ViT-B/32}          & \multicolumn{2}{c}{ViT-B/16}          & \multicolumn{2}{c}{ViT-L/14}          \\ \cline{2-7} 
                                                   & ACC               & BWT               & ACC               & BWT               & ACC               & BWT               \\ \hline
w/ Cosine Similarity                                  & 84.0±2.6          & -4.5±2.4          & 85.1±2.2          & -4.9±2.2          & 92.1±0.7          & -1.5±0.6          \\
\textbf{w/ Projection-Based Subspace Affinity (Ours)} & \textbf{87.1±1.6} & \textbf{-2.2±1.2} & \textbf{90.4±0.6} & \textbf{-1.2±0.6} & \textbf{93.1±0.7} & \textbf{-0.6±0.4} \\ \hline
\end{tabular}
\label{similarity_metrics}
\end{table*}

\begin{wrapfigure}{r}{0.4\textwidth} %
    \centering
    \includegraphics[width=\linewidth]{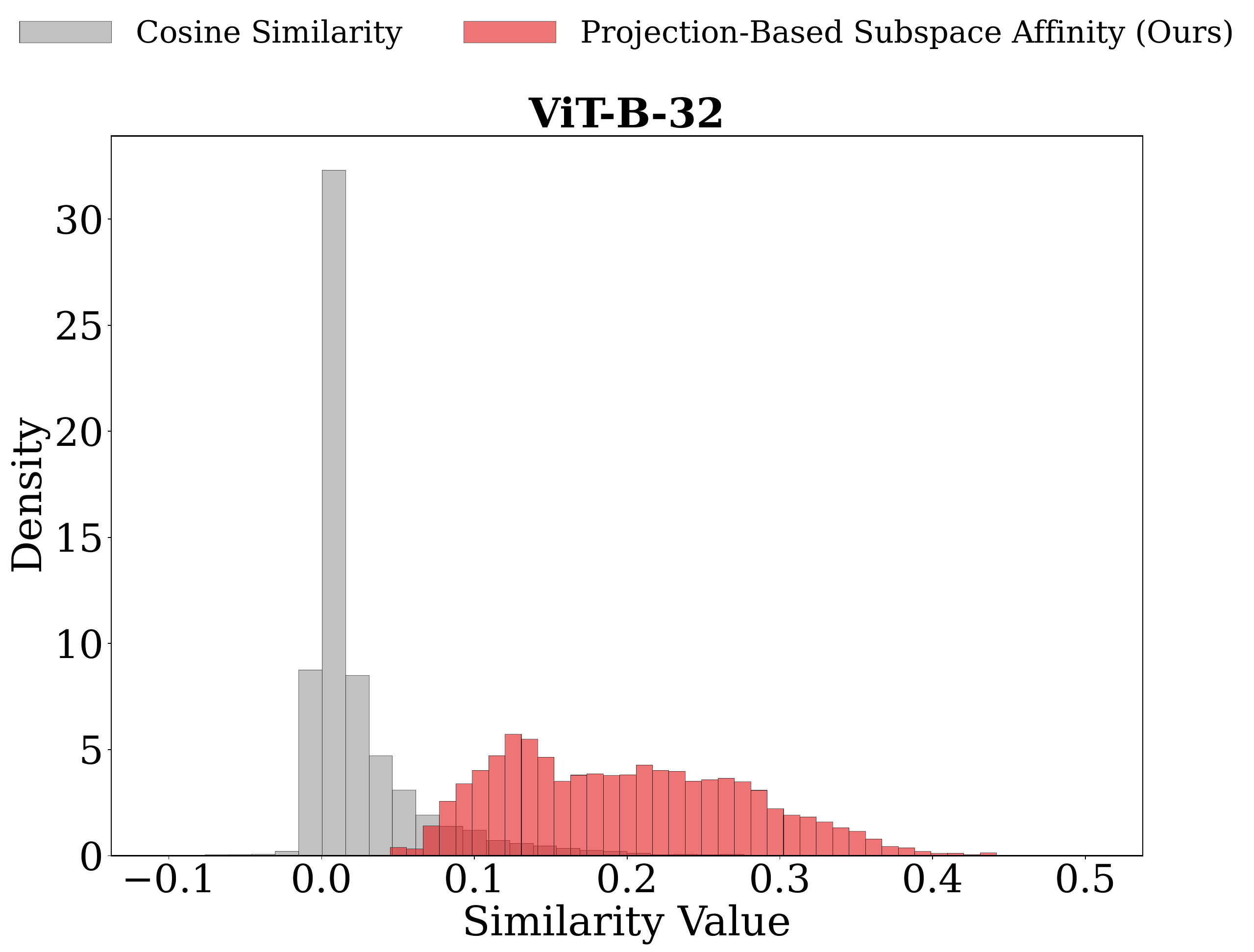}
    \caption{Density distribution of pairwise similarity scores across 20 models.}
    \label{fig:Metric_Distribution}
    \vspace{-5mm} 
\end{wrapfigure}

\noindent\textbf{Distributional Analysis.}
Fig.~\ref{fig:Metric_Distribution} compares the density distribution of pairwise similarity scores. Comparative results reveal that cosine similarity scores cluster tightly around zero due to the inherent orthogonality of high-dimensional parameters, rendering it incapable of distinguishing subtle inter-expert variations. Conversely, our projection-based subspace affinity exhibits a remarkably smoother distribution with a broader dynamic range. This enhanced discriminability unlocks rich informational signals of inter-expert correlations for dynamic evolution. 

\noindent\textbf{Visualization of Semantic Correlations.}
Fig.~\ref{fig:heatmap} provides microscopic validation via task-wise similarity heatmaps. While cosine similarity yields near-zero off-diagonal values that obscure latent relationships, our subspace affinity naturally manifests distinct clusters (e.g., the MNIST family). This confirms that our metric guides the collaboration of functionally aligned experts with high geometric correlations.

\begin{figure*}[htbp]
\centering
\subfloat[]{%
    \includegraphics[width=0.49\linewidth]{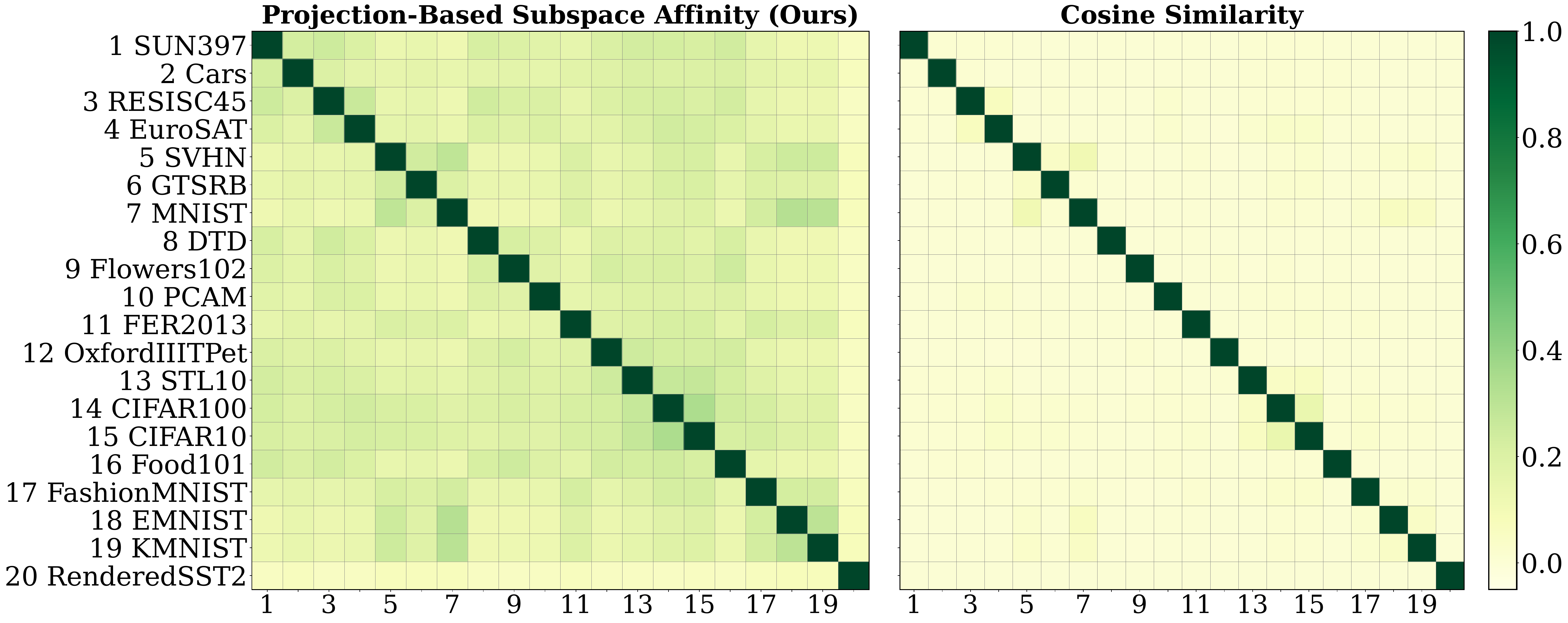}%
    \label{heatmap:sub1}}
\hfil
\subfloat[]{%
    \includegraphics[width=0.49\linewidth]{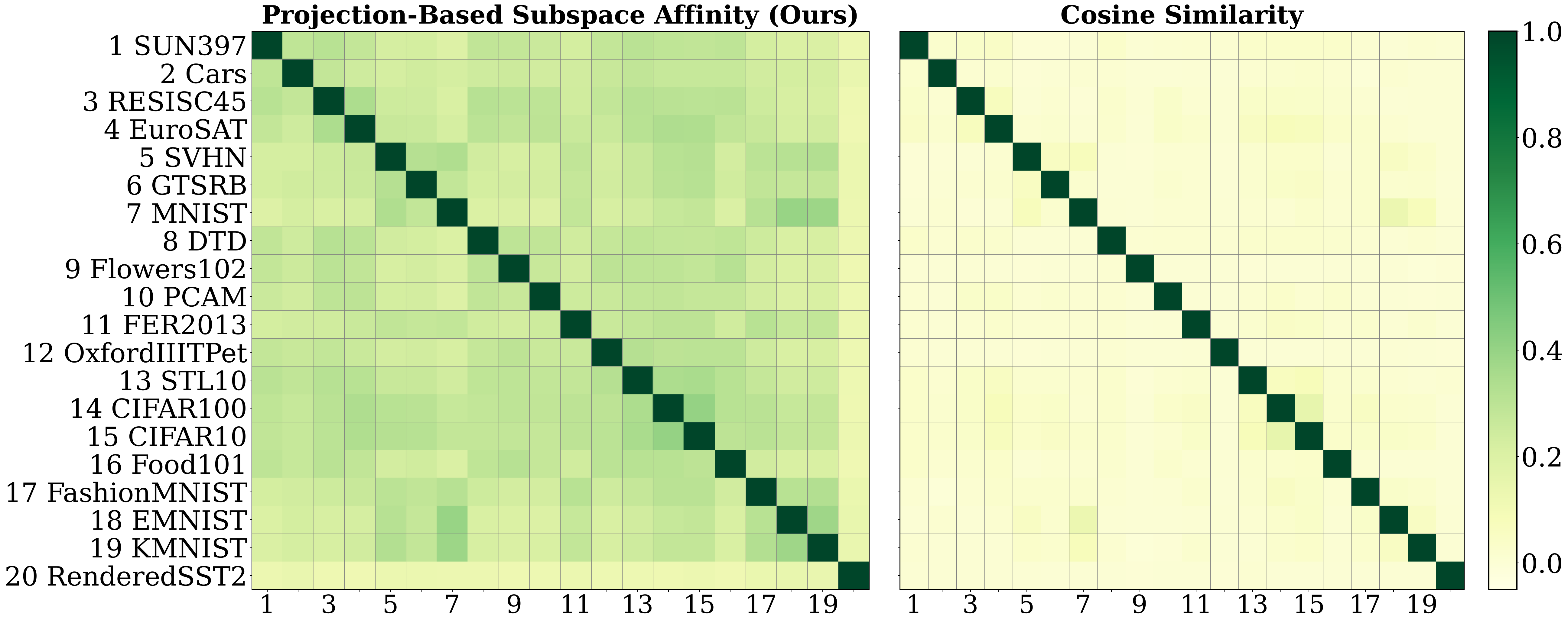}%
    \label{heatmap:sub2}}
\caption{Similarity heatmaps for (a) mlp.fc1 and (b) self\_attn.k.proj in the 6-th layer of ViT-B/32.}
\label{fig:heatmap}
\end{figure*}

\subsection{Hyperparameter Sensitivity Analysis}
\label{sec:Sensitivity Analysis}
\noindent\textbf{Sensitivity to Rank Ratio $\rho$.}
As shown in Fig.~\ref{hyperR}, performance plateaus within $\rho \leq 0.6$ and precipitously declines as $\rho$ approaches unity. Crucially, unconstrained ranks inject noise as expert subspaces degenerate toward the full parameter space, deteriorating the feature discriminability essential for implicit routing. Consequently, adopting a compact $\rho=0.1$ enhances both parameter efficiency and expert specificity via rigorous denoising, ensuring multi-task performance.

\begin{figure*}[htbp]
\centering
\subfloat[]{%
    \includegraphics[width=0.4\linewidth]{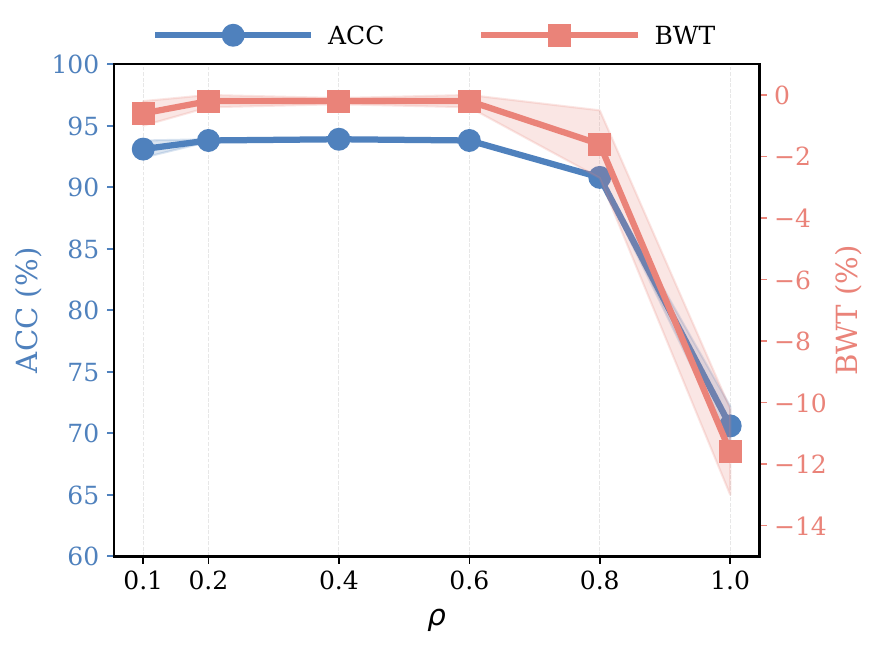}%
    \label{hyperR}}
\hspace{0.08\linewidth}
\subfloat[]{%
    \includegraphics[width=0.4\linewidth]{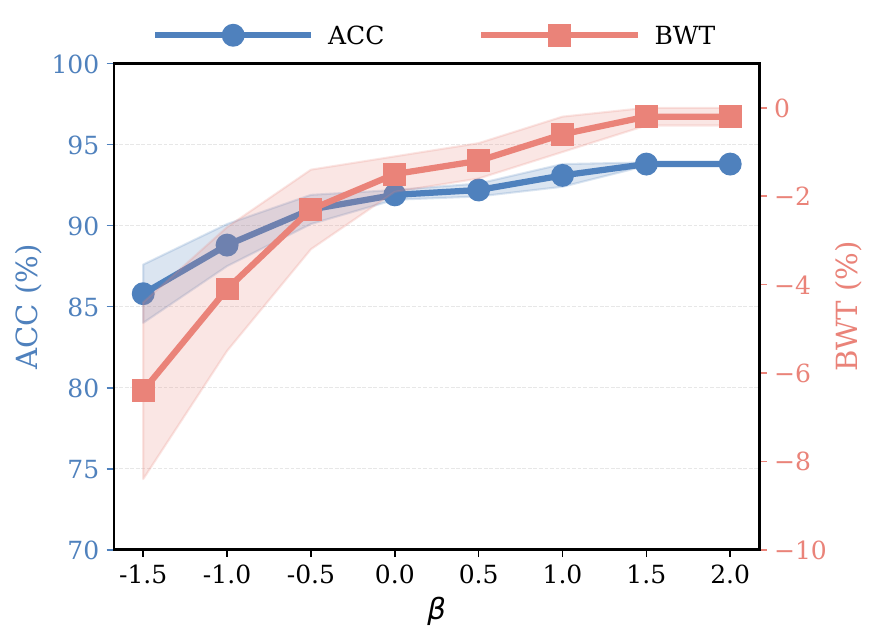}%
    \label{hyperK}}
\caption{Sensitivity analysis of the (a) rank ratio $\rho$ and (b) margin coefficient $\beta$ across the 8-task setting on the ViT-L/14 architecture. Shaded areas denote the standard deviation.}
\label{fig:hyperR&k}
\end{figure*}

\noindent\textbf{Sensitivity to Margin Coefficient $\beta$.}
As shown in Fig.~\ref{hyperK}, increasing $\beta$ relaxes the creation criterion, facilitating the formation of diverse expert ensembles that enhance cross-task performance. However, a performance plateau emerges beyond $\beta=1.0$, where further threshold elevation yields diminishing returns and inflates the parameter footprint due to redundant experts. Consequently, adopting $\beta=1.0$ balances representational fidelity with architectural parsimony across complex task sequences.

\section{Conclusion}

This paper resolves the saturation-redundancy dilemma and the routing bottleneck in continual merging by proposing MADE-IT, which reframes expert management and activation through manifold geometry. MADE-IT utilizes robust subspace affinity to guide autonomous expert evolution, harmonizing expert diversity with architectural parsimony. Furthermore, its inherently training-free implicit routing mechanism achieves efficient expert activation via feature-subspace alignment. Extensive evaluations confirm that MADE-IT significantly enhances multi-task accuracy while mitigating catastrophic forgetting, establishing it as an adaptive and scalable continual merging approach.
\bibliographystyle{ACM-Reference-Format}
\bibliography{cmm}


\appendix

\section{Theoretical Properties of Subspace Affinity}
\label{ap:theory}
This section provides detailed derivations for the theoretical properties of our proposed subspace affinity. Let $\tilde{S}_1, \tilde{S}_2 \in \mathbb{R}^{d \times r}$ denote the orthonormal basis matrices spanning two $r$-dimensional subspaces $\mathcal{S}_1$ and $\mathcal{S}_2$. The orthogonal projection operators are defined as $\mathbf{P}_1 = \tilde{S}_1\tilde{S}_1^\top$ and $\mathbf{P}_2 = \tilde{S}_2\tilde{S}_2^\top$. The individual affinity component is defined as $\mathcal{A}_\mathcal{S} = \frac{1}{r} \| \tilde{S}_1^\top \tilde{S}_2 \|_F^2$.

\subsection{Trace-Frobenius Equivalence}
The computational efficiency of $\mathcal{A}_\mathcal{S}$ is facilitated by the equivalence between the Hilbert-Schmidt inner product of projection operators and the squared Frobenius norm of their basis interaction matrix.

\begin{lemma}[Trace-Frobenius Equivalence]
Given projection operators $\mathbf{P}_1, \mathbf{P}_2$ and their orthonormal bases $\tilde{S}_1, \tilde{S}_2$, the following identity holds:
\begin{equation}
    \operatorname{Tr}(\mathbf{P}_1 \mathbf{P}_2) = \| \tilde{S}_1^\top \tilde{S}_2 \|_F^2.
\end{equation}
\end{lemma}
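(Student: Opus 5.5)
The plan is to expand both projection operators in terms of their bases and then apply the cyclic property of the trace. Substituting $\mathbf{P}_1 = \tilde{S}_1\tilde{S}_1^\top$ and $\mathbf{P}_2 = \tilde{S}_2\tilde{S}_2^\top$ gives
\begin{equation*}
\operatorname{Tr}(\mathbf{P}_1\mathbf{P}_2) = \operatorname{Tr}\bigl(\tilde{S}_1\tilde{S}_1^\top\tilde{S}_2\tilde{S}_2^\top\bigr).
\end{equation*}
Cyclicity is valid here for the non-square factors, since $\operatorname{Tr}(AB)=\operatorname{Tr}(BA)$ whenever both products are defined. Moving the leading factor $\tilde{S}_1$ to the end yields $\operatorname{Tr}\bigl(\tilde{S}_1^\top\tilde{S}_2\tilde{S}_2^\top\tilde{S}_1\bigr)$, which is now the trace of an $r\times r$ matrix.

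Next, set $M = \tilde{S}_1^\top\tilde{S}_2 \in \mathbb{R}^{r\times r}$. Then $\tilde{S}_2^\top\tilde{S}_1 = M^\top$, so the expression above equals $\operatorname{Tr}(MM^\top)$. By the definition of the Frobenius norm, $\operatorname{Tr}(MM^\top)=\sum_{i,j}M_{ij}^2=\|M\|_F^2$, which gives the claimed identity.

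Orthonormality of the bases is not actually needed for the algebraic identity itself. It matters only for interpreting $\mathbf{P}_1$ and $\mathbf{P}_2$ as orthogonal projectors, which in turn gives the basis invariance already noted in the main text. As a remark, the same computation also shows the symmetry $\|\tilde{S}_1^\top\tilde{S}_2\|_F=\|\tilde{S}_2^\top\tilde{S}_1\|_F$, consistent with the form $\|\tilde{S}_c^{\prime(t)\top}\tilde{S}_c^{(k)}\|_F^2$ used in the main text. There is no real obstacle. The only point requiring care is justifying cyclicity for the rectangular factors, which I would state explicitly.
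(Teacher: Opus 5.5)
Your proof is correct and follows the paper's argument: substitute $\mathbf{P}_i = \tilde{S}_i\tilde{S}_i^\top$, apply cyclicity of the trace, and identify the result with a squared Frobenius norm. The only cosmetic difference is the direction of the cyclic shift. The paper moves $\tilde{S}_2^\top$ to the front to obtain $\operatorname{Tr}(M^\top M)$, whereas you move $\tilde{S}_1$ to the back to obtain $\operatorname{Tr}(MM^\top)$. Your side remarks, that orthonormality is not needed for the algebraic identity and that cyclicity holds for rectangular factors, are accurate.
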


\begin{proof}
Expanding the trace of the product of the projection operators yields:
\begin{equation}
    \operatorname{Tr}(\mathbf{P}_1 \mathbf{P}_2) = \operatorname{Tr}(\tilde{S}_1 \tilde{S}_1^\top \tilde{S}_2 \tilde{S}_2^\top).
\end{equation}
Leveraging the cyclic property of the trace, $\operatorname{Tr}(ABCD) = \operatorname{Tr}(DABC)$, we cyclically permute the matrices:
\begin{equation}
    \operatorname{Tr}(\tilde{S}_1 \tilde{S}_1^\top \tilde{S}_2 \tilde{S}_2^\top) = \operatorname{Tr}(\tilde{S}_2^\top \tilde{S}_1 \tilde{S}_1^\top \tilde{S}_2)= \operatorname{Tr}((\tilde{S}_1^\top \tilde{S}_2)^\top (\tilde{S}_1^\top \tilde{S}_2)).
\end{equation}
Recall that for any matrix $M$, $\operatorname{Tr}(M^\top M) = \|M\|_F^2$. Setting $M = \tilde{S}_1^\top \tilde{S}_2 \in \mathbb{R}^{r \times r}$, we obtain:
\begin{equation}
    \operatorname{Tr}(\mathbf{P}_1 \mathbf{P}_2) = \| \tilde{S}_1^\top \tilde{S}_2 \|_F^2,
\end{equation}
which proves the lemma. This identity justifies the computational reduction from the $d \times d$ ambient parameter space to an $r \times r$ compact subspace interaction.
\end{proof}

\subsection{Relation to Principal Angles}

\begin{proposition}[Spectral Interpretation]
Let $0 \le \phi_1 \le \dots \le \phi_r \le \pi/2$ be the principal angles between $\mathcal{S}_1$ and $\mathcal{S}_2$. The affinity $\mathcal{A}_\mathcal{S}$ represents the mean squared cosine of these principal angles:
\begin{equation}
    \mathcal{A}_\mathcal{S} = \frac{1}{r} \sum_{i=1}^r \cos^2 \phi_i.
\end{equation}
\end{proposition}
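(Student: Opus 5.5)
The plan is to compute the singular values of the $r\times r$ interaction matrix $M=\tilde{S}_1^\top\tilde{S}_2$ and then use the fact that the squared Frobenius norm of a matrix equals the sum of its squared singular values. The only substantive input is the standard characterization of principal angles, due to Björck and Golub: for orthonormal bases $\tilde{S}_1,\tilde{S}_2$ of $\mathcal{S}_1,\mathcal{S}_2$, the cosines $\cos\phi_1\ge\dots\ge\cos\phi_r$ are exactly the singular values $\sigma_1\ge\dots\ge\sigma_r$ of $M$.

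\textbf{Step 1: bound the singular values.} Write the SVD $M=Y\,\mathrm{diag}(\sigma_i)\,Z^\top$ with $Y,Z\in\mathcal{O}(r)$. Since $\|\tilde{S}_1^\top\|_2\le 1$ and $\|\tilde{S}_2\|_2=1$, we have $\|M\|_2\le 1$, so every $\sigma_i$ lies in $[0,1]$.

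\textbf{Step 2: identify the principal angles.} Define the rotated bases $\tilde{S}_1Y$ and $\tilde{S}_2Z$. By the basis invariance of $\mathcal{A}_\mathcal{S}$ noted in the main text, this rotation changes neither the subspaces nor the affinity. In these bases the interaction matrix becomes $(\tilde{S}_1Y)^\top(\tilde{S}_2Z)=\mathrm{diag}(\sigma_i)$. Write $u_i=\tilde{S}_1Y e_i$ and $v_i=\tilde{S}_2Z e_i$. These columns are orthonormal within each subspace and satisfy the biorthogonality relation $u_i^\top v_j=\sigma_i\delta_{ij}$.

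The recursive variational definition of principal angles is
\begin{equation*}
\cos\phi_k=\max\{u^\top v:\ u\in\mathcal{S}_1,\ v\in\mathcal{S}_2,\ \|u\|=\|v\|=1,\ u\perp u_1,\dots,u_{k-1},\ v\perp v_1,\dots,v_{k-1}\}.
\end{equation*}
It is attained exactly by these pairs, with $\cos\phi_k=\sigma_k$. This identification is the main obstacle, and I handle it as follows. For unit coefficient vectors $a,b$ restricted to the orthogonal complement of the first $k-1$ coordinates, we have $u^\top v=a^\top\mathrm{diag}(\sigma)\,b$. This quantity is at most $\sigma_k$, by the Courant--Fischer/von Neumann maximal characterization of singular values, and equality holds at $a=b=e_k$. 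Alternatively, I may simply cite Björck--Golub as a standard fact.

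\textbf{Step 3: conclude.} Frobenius norm is unitarily invariant, so
\begin{equation*}
\|M\|_F^2=\operatorname{Tr}(M^\top M)=\sum_{i=1}^r\sigma_i^2=\sum_{i=1}^r\cos^2\phi_i.
\end{equation*}
Dividing by $r$ gives $\mathcal{A}_\mathcal{S}=\frac1r\sum_i\cos^2\phi_i$. Alternatively, by the Trace-Frobenius Equivalence lemma above, this is also $\frac1r\operatorname{Tr}(\mathbf{P}_1\mathbf{P}_2)$.

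As a consistency check, the result is compatible with the earlier claims that $\mathcal{A}_\mathcal{S}\in[0,1]$ and $\mathcal{A}_\mathcal{S}=1-\frac1r d_C^2$. Indeed, $d_C^2=\frac12\|\mathbf{P}_1-\mathbf{P}_2\|_F^2=r-\operatorname{Tr}(\mathbf{P}_1\mathbf{P}_2)=\sum_i\sin^2\phi_i$.
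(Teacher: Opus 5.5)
Your proposal is correct and follows the same route as the paper: identify the $\cos\phi_i$ with the singular values of $M=\tilde{S}_1^\top\tilde{S}_2$, then use $\|M\|_F^2=\operatorname{Tr}(M^\top M)=\sum_i\sigma_i^2$. The only difference is that you also sketch why the singular values equal the principal-angle cosines, which the paper takes as a standard Bj\"orck--Golub fact; for that sketch the plain Cauchy--Schwarz bound $\sum_{i\ge k}a_i\sigma_i b_i\le\sigma_k$ already suffices, so you do not need Courant--Fischer or von Neumann.
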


\begin{proof}
In manifold geometry, the cosines of the principal angles $\{\cos \phi_i\}_{i=1}^r$ are equal to the singular values $\{\sigma_i\}_{i=1}^r$ of the basis interaction matrix $M = \tilde{S}_1^\top \tilde{S}_2$. Let the singular value decomposition of $M$ be $M = U \Sigma V^\top$, where $U, V \in \mathcal{O}(r)$ are orthogonal matrices, and $\Sigma = \operatorname{diag}(\cos \phi_1, \dots, \cos \phi_r)$. The squared Frobenius norm of $M$ can be expanded as: 
\begin{equation}
    \| M \|_F^2 = \operatorname{Tr}(M^\top M) = \operatorname{Tr}(V \Sigma U^\top U \Sigma V^\top).
\end{equation}
Applying $U^\top U = I_r$, $V^\top V = I_r$ and leveraging the cyclic property of the trace, this simplifies to:
\begin{equation}
    \| M \|_F^2 = \operatorname{Tr}(\Sigma^2) = \sum_{i=1}^r \cos^2 \phi_i.
\end{equation}
Dividing by the rank $r$ yields $\mathcal{A}_\mathcal{S} = \frac{1}{r} \sum_{i=1}^r \cos^2 \phi_i$, completing the proof. This confirms that $\mathcal{A}_\mathcal{S} \in [0, 1]$ serves as a normalized measure of total spectral alignment.
\end{proof}

\subsection{Relation to Chordal Distance}

\begin{proposition}[Equivalence to Normalized Chordal Distance]
\label{prop:chordal_equiv}
The affinity $\mathcal{A}_\mathcal{S}$ is linearly related to the squared Chordal distance $d_C^2 = \frac{1}{2} \| \mathbf{P}_1 - \mathbf{P}_2 \|_F^2$ via the mapping:
\begin{equation}
    \mathcal{A}_\mathcal{S} = 1 - \frac{1}{r} d_C^2.
\end{equation}
\end{proposition}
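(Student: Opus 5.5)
The plan is to expand the squared Frobenius norm of $\mathbf{P}_1 - \mathbf{P}_2$ as a trace. Then we use the algebraic properties of orthogonal projections, together with the Trace-Frobenius Equivalence lemma, to express everything in terms of $\|\tilde{S}_1^\top \tilde{S}_2\|_F^2$.

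First, since $\mathbf{P}_1 - \mathbf{P}_2$ is symmetric, we write
\begin{equation}
\|\mathbf{P}_1 - \mathbf{P}_2\|_F^2 = \operatorname{Tr}\big((\mathbf{P}_1 - \mathbf{P}_2)^2\big) = \operatorname{Tr}(\mathbf{P}_1^2) + \operatorname{Tr}(\mathbf{P}_2^2) - \operatorname{Tr}(\mathbf{P}_1\mathbf{P}_2) - \operatorname{Tr}(\mathbf{P}_2\mathbf{P}_1).
\end{equation}

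Second, we evaluate each term. Idempotence follows from $\tilde{S}_i^\top\tilde{S}_i = I_r$, which gives $\mathbf{P}_i^2 = \mathbf{P}_i$. By the cyclic property, $\operatorname{Tr}(\mathbf{P}_i) = \operatorname{Tr}(\tilde{S}_i^\top \tilde{S}_i) = \operatorname{Tr}(I_r) = r$. The cyclic property also gives $\operatorname{Tr}(\mathbf{P}_2\mathbf{P}_1) = \operatorname{Tr}(\mathbf{P}_1\mathbf{P}_2)$, and the lemma identifies this common value with $\|\tilde{S}_1^\top \tilde{S}_2\|_F^2 = r\,\mathcal{A}_\mathcal{S}$.

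Substituting these values yields $\|\mathbf{P}_1 - \mathbf{P}_2\|_F^2 = 2r - 2r\mathcal{A}_\mathcal{S}$. Hence $d_C^2 = r(1 - \mathcal{A}_\mathcal{S})$, and rearranging gives $\mathcal{A}_\mathcal{S} = 1 - \frac{1}{r}d_C^2$.

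No step is a genuine obstacle. The only point requiring care is the trace-of-projection computation, which relies on the bases having equal rank $r$ and orthonormal columns. As a corollary, the range $d_C \in [0,\sqrt{r}]$ claimed in the main text follows from $\mathcal{A}_\mathcal{S}\in[0,1]$ and the Spectral Interpretation proposition.
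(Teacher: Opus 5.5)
Your proof is correct and matches the paper's argument step for step: you expand $\frac{1}{2}\|\mathbf{P}_1-\mathbf{P}_2\|_F^2$ as a trace, use idempotence and $\operatorname{Tr}(\mathbf{P}_i)=r$, and apply the Trace-Frobenius lemma to the cross term. The only difference is presentation: you justify $\operatorname{Tr}(\mathbf{P}_2\mathbf{P}_1)=\operatorname{Tr}(\mathbf{P}_1\mathbf{P}_2)$ and $\operatorname{Tr}(\mathbf{P}_i)=\operatorname{Tr}(\tilde{S}_i^\top\tilde{S}_i)=r$ explicitly, whereas the paper merges the cross terms directly and cites $\operatorname{Tr}(\mathbf{P})=\operatorname{rank}(\mathbf{P})$.
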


\begin{proof}
Expanding the Frobenius norm definition of the Chordal distance yields:
\begin{align}
    d_C^2 &= \frac{1}{2} \operatorname{Tr}( (\mathbf{P}_1 - \mathbf{P}_2)^\top (\mathbf{P}_1 - \mathbf{P}_2) ) \nonumber \\
    &= \frac{1}{2} ( \operatorname{Tr}(\mathbf{P}_1^2) + \operatorname{Tr}(\mathbf{P}_2^2) - 2\operatorname{Tr}(\mathbf{P}_1 \mathbf{P}_2) ).
\end{align}
Utilizing the idempotence ($\mathbf{P}^2 = \mathbf{P}$) and the property that $\operatorname{Tr}(\mathbf{P}) = \operatorname{rank}(\mathbf{P}) = r$:
\begin{equation}
    d_C^2 = \frac{1}{2} (r + r - 2\operatorname{Tr}(\mathbf{P}_1 \mathbf{P}_2)) = r - \operatorname{Tr}(\mathbf{P}_1 \mathbf{P}_2).
\end{equation}
By applying the Trace-Frobenius Lemma, we substitute $\operatorname{Tr}(\mathbf{P}_1 \mathbf{P}_2) = \| \tilde{S}_1^\top \tilde{S}_2 \|_F^2$:
\begin{equation}
    d_C^2 = r - \| \tilde{S}_1^\top \tilde{S}_2 \|_F^2.
\end{equation}
Substituting the definition $\mathcal{A}_\mathcal{S} = \frac{1}{r} \| \tilde{S}_1^\top \tilde{S}_2 \|_F^2$, we obtain $\mathcal{A}_\mathcal{S} = 1 - \frac{1}{r} d_C^2$, completing the proof. This confirms that $\mathcal{A}_\mathcal{S}$ operates as a scale-invariant metric for subspace proximity.
\end{proof}

\section{Workflow of MADE-IT}
\label{ap:algorithm}
We summarize the main workflow of MADE-IT in Algorithm \ref{alg:made_it}, which comprises two primary phases: Manifold-Aware Dynamic Expert Evolution and Implicit Routing. 

\noindent\textbf{Dynamic Expert Evolution.}
The procedure commences by initializing the expert repositories for all target modules (line 1). Upon the arrival of each fine-tuned model for a specific task, the algorithm extracts the principal subspace from the module-wise weight updates via truncated SVD to construct a candidate modular expert (lines 4-5). Subsequently, it computes the projection-based subspace affinity between the newly extracted candidate expert and all existing experts within the respective module (lines 6-7). Based on an adaptive threshold calibrated from historical statistics (lines 8-9), the algorithm performs branching logic for expert evolution: if the maximum affinity is below the threshold, the mechanism instantiates the candidate as a new expert and assigns its task identity (lines 11-13); otherwise, it consolidates the candidate into the most similar reference expert via subspace merging and updates the corresponding task identity set (lines 14-18). Finally, the globally maintained expert set is updated (lines 20-21).

\noindent\textbf{Implicit Routing.}
During multi-task inference, we first initialize the activated expert set (line 23), identify the anchor module with the highest expert diversity (line 24), and perform a backbone forward pass to obtain intermediate features (line 25). The optimal expert for the anchor module is activated based on the Feature Projection Alignment scores, and the global pathway constraint is initialized according to its task identity set (lines 26-29). This constraint is then propagated across the expert dependency graph to activate functionally consistent candidates and resolve potential path bifurcations in other modules (lines 30-37). Finally, the activated experts are integrated into the pre-trained backbone for model inference (line 38).

\begin{algorithm}[htbp]
\caption{MADE-IT Procedure.}
\label{alg:made_it}
\textbf{Input}: Pre-trained model $\theta^{(0)}$, sequentially arriving fine-tuned models $\{\theta^{(t)}\}_{t=1}^T$, target module set $\mathcal{C}$, rank ratio $\rho$, margin coefficient $\beta$, input $\boldsymbol{x}$\\
\textbf{Output}: Expert set $\mathbf{E}^{(T)}$, output $\boldsymbol{y}$
\begin{algorithmic}[1]
\STATE Initialize: $\mathbf{E}^{(0)}= \{\mathbf{E}_c\}_{c \in \mathcal{C}} \leftarrow \emptyset$ \hfill \textbf{\# Expert Evolution}
\FOR {each task $t=1$ to $T$}
    \FOR {each module $c \in \mathcal{C}$}
        \STATE {\scriptsize{$\triangleright$ extract principal subspace} via $\operatorname{SVD}_r(W_c^{(t)} - W_c^{(0)})$ with low rank $r \leftarrow \lfloor \rho \cdot \min(d_i, d_o) \rfloor$}
        \STATE $E_c^{\prime(t)} \leftarrow  \tilde{U}_c^{\prime(t)} \tilde{\Sigma}_c^{\prime(t)} \tilde{V}_c^{\prime(t)\top}$ via Eq.(\ref{equation:E})
        
        \STATE {\scriptsize{$\triangleright$ compute subspace affinity with existing experts}}
        \STATE Compute $\mathcal{A}(E_{c}^{(k)}, E_c^{\prime(t)})$, for $E_{c}^{(k)} \in \mathbf{E}_{c}$ via Eq.(\ref{equation:A})

        \STATE {\scriptsize{$\triangleright$ adapt threshold} with mean $\mu$ and std $\sigma$ of historical affinity scores}
        \STATE $\gamma_{\mathcal{A}}^{(t)} = \mu^{(t)} + \beta\sigma^{(t)}$
        
        \STATE {\scriptsize{$\triangleright$ evolve experts}}
        
        \IF{$\mathbf{E}_c = \emptyset \lor \displaystyle \max_{1\leq k\leq|\mathbf{E}_c|}\mathcal{A}(E_c^{(k)},E_c^{\prime(t)}) < \gamma_{\mathcal{A}}^{(t)}$} 
            \STATE $\mathbf{E}_c \leftarrow \mathbf{E}_c \cup \{E_c^{\prime(t)}\}$
            \STATE $\Lambda(E_c^{\prime(t)}) \leftarrow \{t\}$
        \ELSE
            \STATE $k^* = \operatorname{arg\,max}_k \mathcal{A}(E_{c}^{(k)}, E_c^{\prime(t)})$
            \STATE $E_c^{(k^*)} \leftarrow \text{SubspaceMerge}(E_c^{(k^*)},E_c^{\prime(t)})$
            \STATE $\Lambda(E_c^{(k^*)}) \leftarrow \Lambda(E_c^{(k^*)}) \cup \{t\}$
        \ENDIF
    \ENDFOR
    \STATE {\scriptsize{$\triangleright$ update expert set}}
    \STATE $\mathbf{E}^{(t)} \leftarrow \{\mathbf{E}_c \}_{c \in \mathcal{C}}$
\ENDFOR
\STATE Initialize: $\mathbf{E}_{\text{activated}} \leftarrow \emptyset$ \hfill \textbf{\# Implicit Routing}
\STATE $c^* \leftarrow \arg\max_{c \in \mathcal{C}} |\mathbf{E}_c|$
\STATE $\{\mathbf{h}_c\}_{c \in \mathcal{C}} \leftarrow \operatorname{Forward}(\boldsymbol{x}; \theta^{(0)})$
\STATE {\scriptsize{$\triangleright$ activate expert by computing projection alignment scores}}
\STATE $E_{c^*}^* \leftarrow \arg\max_{E \in \mathbf{E}_{c^*}} \mathcal{M}(\mathbf{h}_{c^*}, E)$ via Eq.~(\ref{eq:fpa})
\STATE $\Lambda_{\text{active}} \leftarrow \Lambda(E_{c^*}^*)$
\STATE $\mathbf{E}_{\text{activated}} \leftarrow \mathbf{E}_{\text{activated}} \cup \{E_{c^*}^*\}$ 
\FOR{$c \in \mathcal{C} \setminus \{c^*\}$}
    \STATE $\mathbf{E}_c^{*} \leftarrow \{E_c^{(k)} \mid \Lambda(E_c^{(k)}) \cap \Lambda_{\text{active}} \neq \emptyset\}$
    \IF{$|\mathbf{E}_c^{*}| > 1$}
        \STATE $\mathbf{E}_{c}^* \leftarrow \{\arg\max_{E \in \mathbf{E}_{c}^{*}} \mathcal{M}(h_{c}, E)\}$
    \ENDIF
    \STATE $\Lambda_{\text{active}} \leftarrow \Lambda_{\text{active}} \cap \Lambda(\mathbf{E}_{c}^*)$
    \STATE $\mathbf{E}_{\text{activated}} \leftarrow \mathbf{E}_{\text{activated}} \cup \mathbf{E}_{c}^*$ 
\ENDFOR
\STATE $\boldsymbol{y} \leftarrow f(\boldsymbol{x}; \operatorname{Integrate}(\theta^{(0)}, \mathbf{E}_{\text{activated}}))$
\end{algorithmic}
\end{algorithm}

\section{Details of Experiment Settings}
\label{ap:setup}
\subsection{Model and Dataset Details}
Following the experimental setup from \citep{tang2025merging}, we use ViT-B/32, ViT-B/16 and ViT-L/14 models, each independently fine-tuned on 20 image classification datasets. These vision tasks include SUN397 \citep{xiao2010sun}, Stanford Cars \citep{krause20133d}, RESISC45 \citep{cheng2017remote}, EuroSAT \citep{helber2019eurosat}, SVHN \citep{netzer2011reading}, GTSRB \citep{stallkamp2012man}, MNIST \citep{lecun2002gradient}, DTD \citep{cimpoi2014describing}, Flowers102 \citep{nilsback2008automated}, PCAM \citep{veeling2018rotation}, FER2013 \citep{goodfellow2013challenges}, Oxford-IIIT Pet \citep{parkhi2012cats}, STL-10 \citep{coates2011analysis}, CIFAR-100 and CIFAR-10 \citep{krizhevsky2009learning}, Food-101 \citep{bossard2014food}, Fashion-MNIST \citep{xiao2017fashion}, EMNIST \citep{cohen2017emnist}, KMNIST \citep{clanuwat2018deep}, and Rendered SST-2 \citep{socher2013recursive}. 
These open-source datasets and fine-tuned checkpoints are obtained from Hugging Face \citep{wolf2019huggingface}. During fine-tuning, the visual encoder is updated while the text encoder remains fixed. Training follows a consistent setup: cross-entropy loss, Adam optimizer, cosine annealing schedule, learning rate of $1 \times 10^{-5}$, batch size of 128, and 4000 training steps. 

\subsection{Task Grouping and Sequence Protocols}
To comprehensively evaluate the scalability and long-horizon stability of continual merging methods under varying workloads, following the protocols in \citep{tang2025merging,qiu2025mingle}, we partition the 20 benchmark datasets into three progressive task groups: 
\begin{itemize}
    \item Short-Range Benchmark (8-Task Group): (1) SUN397, (2) Stanford Cars, (3) RESISC45, (4) EuroSAT, (5) SVHN, (6) GTSRB, (7) MNIST, (8) DTD.
    \item Medium-Range Benchmark (14-Task Group): (1) SUN397, (2) Stanford Cars, (3) RESISC45, (4) EuroSAT, (5) SVHN, (6) GTSRB, (7) MNIST, (8) DTD, (9) Flowers102, (10) PCAM, (11) FER2013, (12) OxfordIIITPet, (13) STL10, (14) CIFAR100.
    \item Long-Range Benchmark (20-Task Group): (1) SUN397, (2) Stanford Cars, (3) RESISC45, (4) EuroSAT, (5) SVHN, (6) GTSRB, (7) MNIST, (8) DTD, (9) Flowers102, (10) PCAM, (11) FER2013, (12) OxfordIIITPet, (13) STL10, (14) CIFAR100, (15) CIFAR10, (16) Food101, (17) FashionMNIST, (18) EMNIST, (19) KMNIST, (20) RenderedSST2.
\end{itemize}
To mitigate the impact of task order and ensure statistical reliability, we conduct 10 independent runs for each task group, each with a distinct randomly permuted task sequence generated with seeds ranging from 42 to 51, as detailed in Table~\ref{tab:task_sequences}. Consequently, we report the mean and standard deviation of both average accuracy (ACC) and backward transfer (BWT) metrics across these trials, providing a robust and consistent evaluation of continual merging performance.
\begin{table*}[h]
    \centering
    \caption{Task sequences used for robustness evaluation. Task IDs correspond to the index in the respective dataset list.}
    \label{tab:task_sequences}
    \resizebox{\textwidth}{!}{
    \begin{tabular}{c|c|l}
    \toprule
    \textbf{Group} & \textbf{Order} & \textbf{Dataset Order (by Task ID)} \\
    \midrule
    \multirow{10}{*}{\rotatebox{90}{\textbf{8 Tasks}}} 
    & 1 & $04\to05\to07\to08\to03\to06\to01\to02$ \\
    & 2 & $07\to08\to05\to04\to02\to06\to03\to01$ \\
    & 3 & $03\to06\to04\to02\to01\to08\to05\to07$ \\
    & 4 & $06\to08\to02\to01\to03\to07\to04\to05$ \\
    & 5 & $07\to06\to03\to08\to05\to01\to04\to02$ \\
    & 6 & $07\to02\to03\to08\to05\to04\to01\to06$ \\
    & 7 & $07\to01\to04\to03\to08\to05\to02\to06$ \\
    & 8 & $08\to05\to06\to07\to01\to04\to03\to02$ \\
    & 9 & $01\to04\to05\to02\to06\to03\to07\to08$ \\
    & 10 & $08\to03\to01\to02\to06\to05\to07\to04$ \\
    \midrule
    \multirow{10}{*}{\rotatebox{90}{\textbf{14 Tasks}}} 
    & 1 & $09\to13\to08\to07\to14\to12\to06\to03\to10\to04\to05\to01\to02\to11$ \\
    & 2 & $09\to10\to11\to14\to07\to13\to04\to02\to06\to08\to03\to12\to05\to01$ \\
    & 3 & $05\to08\to12\to06\to11\to01\to10\to04\to14\to03\to02\to13\to09\to07$ \\
    & 4 & $03\to10\to09\to12\to04\to13\to01\to06\to11\to02\to14\to08\to07\to05$ \\
    & 5 & $08\to14\to09\to06\to12\to13\to05\to03\to04\to11\to10\to01\to07\to02$ \\
    & 6 & $03\to12\to13\to01\to11\to04\to10\to05\to14\to08\to09\to07\to02\to06$ \\
    & 7 & $07\to01\to12\to10\to02\to08\to13\to04\to05\to11\to14\to03\to06\to09$ \\
    & 8 & $05\to12\to04\to11\to03\to08\to10\to01\to09\to13\to14\to07\to06\to02$ \\
    & 9 & $10\to07\to09\to02\to03\to13\to01\to12\to14\to04\to11\to06\to05\to08$ \\
    & 10 & $01\to02\to11\to06\to08\to12\to07\to05\to10\to14\to03\to13\to09\to04$ \\
    \midrule
    \multirow{10}{*}{\rotatebox{90}{\textbf{20 Tasks}}} 
    & 1 & $20\to06\to15\to05\to10\to14\to16\to19\to07\to13\to18\to11\to02\to12\to03\to17\to08\to09\to01\to04$ \\
    & 2 & $09\to14\to06\to03\to07\to04\to18\to01\to17\to19\to08\to20\to13\to16\to11\to12\to15\to05\to10\to02$ \\
    & 3 & $09\to15\to16\to11\to03\to13\to08\to10\to12\to02\to20\to01\to05\to19\to07\to06\to04\to18\to17\to14$ \\
    & 4 & $17\to04\to11\to19\to18\to10\to07\to15\to12\to13\to08\to02\to01\to06\to05\to03\to20\to16\to14\to09$ \\
    & 5 & $14\to16\to04\to20\to15\to17\to07\to11\to06\to18\to12\to01\to19\to09\to10\to05\to08\to02\to13\to03$ \\
    & 6 & $02\to06\to17\to04\to19\to18\to08\to16\to20\to01\to10\to13\to07\to09\to05\to11\to15\to14\to03\to12$ \\
    & 7 & $19\to01\to09\to14\to06\to20\to17\to04\to08\to02\to15\to03\to16\to13\to12\to07\to10\to05\to11\to18$ \\
    & 8 & $15\to07\to08\to02\to10\to06\to17\to20\to05\to19\to16\to01\to18\to09\to13\to11\to04\to14\to12\to03$ \\
    & 9 & $10\to05\to07\to11\to01\to03\to17\to15\to18\to04\to14\to19\to02\to06\to13\to20\to08\to12\to09\to16$ \\
    & 10 & $01\to11\to02\to15\to03\to10\to12\to19\to16\to13\to07\to05\to09\to04\to14\to20\to06\to18\to17\to08$ \\
    \bottomrule
    \end{tabular}%
    }
\end{table*}

\subsection{Baseline Details}
We provide an elaborate description of the three baseline categories to facilitate a clear comparison.

\paragraph{Non-Merging Methods.}
\begin{itemize}
    \item Pre-trained models are used as zero-shot baselines, establishing the lower bound of performance without task-specific adaptation.
    \item Fine-tuned models are independently adapted for each task, serving as specialized references that preserve optimal task-specific performance.
    \item C. Fine-Tuned refers to a standard sequential learning baseline where a single model is iteratively fine-tuned on arriving tasks. 
\end{itemize}

\paragraph{Continual Adaptations of Conventional Merging Methods.}
\begin{itemize}
    \item Simple Weight Averaging (SWA) \citep{izmailov2018averaging} maintains a running average of model parameters to stabilize optimization. At each step, the current merged model is updated by averaging its previous state with the new model parameters. 
    \item Continual Task Arithmetic (C. TA) \citep{ilharco2023editing} treats task-specific updates as vectors and linearly accumulates them onto the pre-trained backbone using a global scaling coefficient. 
    \item Continual Ties-Merging (C. Ties-Merging) \citep{yadav2023ties} prunes redundant parameters and resolves sign conflicts to mitigate task interference, and merges the pruned task vector into the accumulated backbone.
    \item C. LW AdaMerging \citep{yangadamerging} advances Task Arithmetic by learning layer-wise scaling coefficients.
    \item Continual LoRA Weight-Ensembling MoE (C. LoRA-WEMoE) \citep{shen2026efficient} refers to an adaptation of the WEMoE method for parameter-efficient continual merging. Following the setting in \citep{qiu2025mingle}, we compress the MLP experts using LoRA and aggregate them via a gating function.
\end{itemize}

\paragraph{Continual Merging Methods.}
\begin{itemize}
    \item Orthogonal Projection-based Continual Merging (OPCM) \citep{tang2025merging} projects each task vector $\tau^{(t)}$ onto the orthogonal complement of previous updates using a projection mapping $\mathcal{P}_{\alpha}^{(t-1)}$, then generates the new merged model: $\theta_{\mathrm{merged}}^{(t)}=\theta^{(0)} + \frac{\lambda^{(t-1)}\tau_{\mathrm{merged}}^{(t-1)}+\mathcal{P}_{\alpha}^{(t-1)}\tau^{(t)}}{\lambda^{(t)}}$, where $\tau_{\mathrm{merged}}^{(t-1)}=\theta_{\mathrm{merged}}^{(t-1)}-\theta^{(0)}$ and $\lambda^{(t)}$ denotes a time-varying scaling factor. 
    \item Mixture of Null-Space Gated Low-Rank Experts (MINGLE) \citep{qiu2025mingle} represents an MoE-based CMM approach that assigns each incoming model a dedicated expert $f_i$, built upon the null-space projection mechanism of OPCM. It further adapts the associated task-specific gate function $g_i$ at test time using unlabeled test data. The resulting merged model output for a given input $X$ is formulated as: $\theta^{(t)}_{\mathrm{merged}}(X)=\theta^{(0)}(X)+\sum_{i=1}^t{g_i(X)\cdot f_i(X)}$.
\end{itemize}

\section{Comprehensive Results}
\label{ap:results}
\subsection{Detailed Per-Task Performance}
We present the per-task average accuracy across 10 randomized task sequences for the 20-task continual merging experiments in Table~\ref{tab:pertask32}, Table~\ref{tab:pertask16}, and Table~\ref{tab:pertask14}. The overall results of these per-task accuracy are summarized in Table~\ref{tab:main_results} of the main text. Across all three CLIP-ViT backbones, MADE-IT consistently improves performance on the majority of tasks. These per-task breakdowns further substantiate the overall effectiveness of MADE-IT, reinforcing the main findings reported in the paper.

\begin{table*}[htbp]
\tiny
\setlength{\tabcolsep}{2.5pt}
\caption{Per-task results of merging 20 ViT-B/32 models.}
\label{tab:pertask32}
\begin{tabular}{ccccccccccc}
\hline
\textbf{Method}         & \textbf{SUN397}      & \textbf{Cars}          & \textbf{RESISC45}    & \textbf{EuroSAT}     & \textbf{SVHN}        & \textbf{GTSRB}       & \textbf{MNIST}        & \textbf{DTD}         & \textbf{Flowers102}  & \textbf{PCAM}         \\
C. Fine-Tuned           & 53.9                 & 38.2                   & 64.7                 & \textbf{98.7}        & 45.4                 & 34.4                 & 86.7                  & 58.4                 & 57.5                 & 67.7                  \\
Average (SWA)           & 64.2                 & 59.6                   & 64.8                 & 60.9                 & 47.3                 & 43.1                 & 71.8                  & 46.4                 & 66.5                 & 63.9                  \\
C. Task Arithmetic      & 62.0                 & 53.7                   & 60.9                 & 58.1                 & 48.5                 & 48.9                 & 79.4                  & 46.1                 & 61.1                 & 73.4                  \\
C. Ties-Merging         & 62.5                 & 49.1                   & 55.8                 & 50.9                 & 54.6                 & 49.3                 & 82.0                  & 46.7                 & 58.5                 & 69.9                  \\
C. LW AdaMerging        & 63.1                 & 60.0                   & 63.5                 & 60.1                 & 35.6                 & 32.1                 & 51.8                  & 45.4                 & 66.6                 & 60.2                  \\
C. LoRA-WEMoE           & 51.4                 & 45.8                   & 63.3                 & 43.5                 & 42.9                 & 34.6                 & 58.9                  & 46.5                 & 47.5                 & 60.1                  \\
OPCM                    & 64.4                 & 51.1                   & 66.0                 & 71.7                 & 66.1                 & 56.0                 & 90.2                  & 40.4                 & 64.9                 & 80.2                  \\
MINGLE                  & 67.8                 & 58.3                   & 83.5                 & 90.0                 & \textbf{82.9}        & 91.8                 & \textbf{98.0}         & 65.3                 & 74.0                 & 66.9                  \\
\textbf{MADE-IT (Ours)} & \textbf{75.5}        & \textbf{77.6}          & \textbf{86.6}        & 86.5                 & 78.0                 & \textbf{96.3}        & 80.8                  & \textbf{75.1}        & \textbf{84.5}        & \textbf{84.9}         \\ \hline
\multicolumn{1}{l}{}    & \multicolumn{1}{l}{} & \multicolumn{1}{l}{}   & \multicolumn{1}{l}{} & \multicolumn{1}{l}{} & \multicolumn{1}{l}{} & \multicolumn{1}{l}{} & \multicolumn{1}{l}{}  & \multicolumn{1}{l}{} & \multicolumn{1}{l}{} & \multicolumn{1}{l}{}  \\
\textbf{Method}         & \textbf{FER2013}     & \textbf{OxfordIIITPet} & \textbf{STL10}       & \textbf{CIFAR100}    & \textbf{CIFAR10}     & \textbf{Food101}     & \textbf{FashionMNIST} & \textbf{EMNIST}      & \textbf{KMNIST}      & \textbf{RenderedSST2} \\
C. Fine-Tuned           & 58.3                 & 68.5                   & 86.7                 & 40.2                 & 70.5                 & 50.0                 & \textbf{90.7}         & \textbf{72.4}        & 54.5                 & 54.5                  \\
Average (SWA)           & 50.2                 & 84.1                   & \textbf{97.0}        & 69.8                 & 92.7                 & 80.4                 & 71.3                  & 15.0                 & 11.5                 & 61.8                  \\
C. Task Arithmetic      & 51.4                 & 82.3                   & 94.9                 & 64.6                 & 91.4                 & 71.9                 & 73.9                  & 17.8                 & 12.2                 & 59.9                  \\
C. Ties-Merging         & 49.5                 & 81.3                   & 95.2                 & 63.7                 & 91.2                 & 70.2                 & 73.7                  & 17.8                 & 16.9                 & 59.8                  \\
C. LW AdaMerging        & 43.2                 & 83.7                   & 96.8                 & 67.0                 & 89.9                 & 81.6                 & 63.7                  & 16.8                 & 10.7                 & 59.1                  \\
C. LoRA-WEMoE           & 44.6                 & 72.5                   & 86.1                 & 40.1                 & 63.8                 & 63.8                 & 48.1                  & 10.3                 & 12.8                 & 55.7                  \\
OPCM                    & 58.5                 & 82.9                   & 95.9                 & 67.6                 & 92.8                 & 74.0                 & 76.3                  & 22.4                 & 18.3                 & 64.6                  \\
MINGLE                  & 65.0                 & 85.5                   & \textbf{97.0}        & \textbf{72.6}        & \textbf{94.1}        & 81.5                 & 85.4                  & 50.4                 & 65.2                 & 67.1                  \\
\textbf{MADE-IT (Ours)} & \textbf{70.3}        & \textbf{91.8}          & 91.8                 & 71.9                 & 89.7                 & \textbf{88.3}        & 84.8                  & 71.0                 & \textbf{71.4}        & \textbf{71.3}         \\ \hline
\end{tabular}
\end{table*}

\begin{table*}[htbp]
\tiny
\setlength{\tabcolsep}{2.5pt}
\caption{Per-task results of merging 20 ViT-B/16 models.}
\label{tab:pertask16}
\begin{tabular}{ccccccccccc}
\hline
\textbf{Method}         & \textbf{SUN397}  & \textbf{Cars}          & \textbf{RESISC45} & \textbf{EuroSAT}  & \textbf{SVHN}    & \textbf{GTSRB}   & \textbf{MNIST}        & \textbf{DTD}    & \textbf{Flowers102} & \textbf{PCAM}         \\
C. Fine-Tuned           & 62.7             & 58.0                   & 67.6              & \textbf{99.1}     & 46.0             & 29.2             & 93.9                  & 61.9            & 64.1                & 75.2                  \\
Average (SWA)           & 67.1             & 64.6                   & 69.3              & 63.4              & 62.4             & 52.7             & 80.7                  & 46.6            & 71.8                & 63.1                  \\
C. Task Arithmetic      & 65.8             & 57.5                   & 63.8              & 59.5              & 64.7             & 54.0             & 88.0                  & 45.3            & 67.5                & 67.1                  \\
C. Ties-Merging         & 64.2             & 52.9                   & 60.9              & 53.0              & 62.8             & 48.8             & 88.4                  & 45.0            & 61.3                & 68.5                  \\
C. LW AdaMerging        & 65.5             & 65.7                   & 69.8              & 59.4              & 50.1             & 44.2             & 61.1                  & 47.1            & 71.8                & 57.9                  \\
C. LoRA-WEMoE           & 62.7             & 60.2                   & 69.4              & 37.7              & 52.1             & 39.9             & 63.1                  & 45.3            & 64.3                & 51.7                  \\
OPCM                    & 67.9             & 55.9                   & 73.7              & 77.5              & 74.4             & 63.2             & 94.1                  & 49.2            & 72.3                & 79.6                  \\
MINGLE                  & 71.5             & 64.9                   & 85.3              & 90.0              & \textbf{87.5}    & \textbf{90.1}    & \textbf{97.1}         & 62.7            & 82.6                & 80.6                  \\
\textbf{MADE-IT (Ours)} & \textbf{77.4}    & \textbf{85.2}          & \textbf{91.8}     & 95.2              & 67.8             & 88.0             & 89.6                  & \textbf{76.1}   & \textbf{93.5}       & \textbf{88.6}         \\ \hline
                        &                  &                        &                   &                   &                  &                  &                       &                 &                     &                       \\
\textbf{Method}         & \textbf{FER2013} & \textbf{OxfordIIITPet} & \textbf{STL10}    & \textbf{CIFAR100} & \textbf{CIFAR10} & \textbf{Food101} & \textbf{FashionMNIST} & \textbf{EMNIST} & \textbf{KMNIST}     & \textbf{RenderedSST2} \\
C. Fine-Tuned           & 60.5             & 84.5                   & 90.5              & 38.8              & 73.6             & 61.9             & \textbf{89.7}         & \textbf{83.3}   & 51.5                & \textbf{72.8}         \\
Average (SWA)           & 50.9             & 89.6                   & 98.0     & 72.9              & 94.2             & 85.9             & 73.3                  & 15.6            & 12.4                & 62.5                  \\
C. Task Arithmetic      & 50.7             & 89.3                   & 97.0              & 68.0              & 93.1             & 80.3             & 75.7                  & 18.1            & 16.7                & 61.8                  \\
C. Ties-Merging         & 50.4             & 87.9                   & 96.3              & 63.1              & 91.7             & 78.0             & 75.0                  & 23.4            & 24.9                & 61.5                  \\
C. LW AdaMerging        & 46.8             & 88.9                   & \textbf{98.1}     & 69.2              & 91.4             & 86.6             & 67.2                  & 17.2            & 11.0                & 59.2                  \\
C. LoRA-WEMoE           & 45.6             & 91.2                   & 92.3              & 41.3              & 64.3             & 78.1             & 48.0                  & 23.5            & 16.6                & 52.7                  \\
OPCM                    & 59.5             & 91.8                   & 97.7              & 73.2              & 94.7             & 83.1             & 81.3                  & 26.5            & 23.4                & 66.8                  \\
MINGLE                  & 67.6             & 92.7                   & 97.4              & 74.0              & \textbf{95.3}    & 87.7             & 87.4                  & 73.5            & 79.9                & 74.0                  \\
\textbf{MADE-IT (Ours)} & \textbf{72.3}    & \textbf{94.7}          & 92.7              & \textbf{78.1}     & 89.3             & \textbf{91.6}    & 84.4                  & 62.1            & \textbf{85.2}       & 57.2          \\ \hline       
\end{tabular}
\end{table*}

\begin{table*}[htbp]
\tiny
\setlength{\tabcolsep}{2.5pt}
\caption{Per-task results of merging 20 ViT-L/14 models.}
\label{tab:pertask14}
\begin{tabular}{ccccccccccc}
\hline
\textbf{Method}         & \textbf{SUN397}  & \textbf{Cars}          & \textbf{RESISC45} & \textbf{EuroSAT}  & \textbf{SVHN}    & \textbf{GTSRB}   & \textbf{MNIST}        & \textbf{DTD}    & \textbf{Flowers102} & \textbf{PCAM}         \\
C. Fine-Tuned           & 69.5             & 73.6                   & 78.3              & \textbf{99.2}     & 59.3             & 49.3             & \textbf{98.6}         & 69.7            & 83.2                & 78.3                  \\
Average (SWA)           & 70.7             & 77.7                   & 76.4              & 75.3              & 69.5             & 62.1             & 93.7                  & 57.7            & 80.0                & 73.6                  \\
C. Task Arithmetic      & 70.4             & 74.1                   & 73.9              & 66.3              & 69.9             & 65.6             & 95.1                  & 56.6            & 78.6                & 70.4                  \\
C. Ties-Merging         & 69.7             & 70.3                   & 65.3              & 47.9              & 76.1             & 63.6             & 94.7                  & 54.4            & 77.9                & 72.3                  \\
C. LW AdaMerging        & 68.8             & 78.6                   & 75.9              & 65.7              & 58.3             & 51.6             & 79.9                  & 57.4            & 80.6                & 52.4                  \\
C. LoRA-WEMoE           & 62.1             & 68.1                   & 68.7              & 53.2              & 47.5             & 49.4             & 69.8                  & 49.1            & 66.2                & 54.2                  \\
OPCM                    & 73.1             & 78.3                   & 82.4              & 80.2              & 80.8             & 80.4             & 97.4                  & 61.6            & 84.8                & 76.3                  \\
MINGLE                  & 75.9             & 83.4                   & 87.8              & 88.7              & 91.1             & 94.5             & 98.4                  & 70.8            & 94.8                & 75.3                  \\
\textbf{MADE-IT (Ours)} & \textbf{82.6}    & \textbf{92.7}          & \textbf{90.6}     & 90.2              & \textbf{96.3}    & \textbf{98.5}    & 95.4                  & \textbf{82.2}   & \textbf{98.2}       & \textbf{86.4}         \\ \hline
                        &                  &                        &                   &                   &                  &                  &                       &                 &                     &                       \\
\textbf{Method}         & \textbf{FER2013} & \textbf{OxfordIIITPet} & \textbf{STL10}    & \textbf{CIFAR100} & \textbf{CIFAR10} & \textbf{Food101} & \textbf{FashionMNIST} & \textbf{EMNIST} & \textbf{KMNIST}     & \textbf{RenderedSST2} \\
C. Fine-Tuned           & 68.0             & 92.1                   & 94.5              & 60.5              & 85.7             & 74.8             & \textbf{93.1}         & \textbf{89.0}   & 59.2                & 78.8                  \\
Average (SWA)           & 52.7             & 94.2                   & 99.2              & 81.7              & 97.0             & 90.7             & 77.4                  & 16.1            & 10.4                & 66.1                  \\
C. Task Arithmetic      & 55.7             & 94.2                   & 98.6              & 79.1              & 96.6             & 87.6             & 80.8                  & 17.6            & 10.6                & 63.6                  \\
C. Ties-Merging         & 57.6             & 93.5                   & 97.8              & 74.0              & 95.6             & 84.7             & 79.7                  & 20.2            & 12.6                & 58.4                  \\
C. LW AdaMerging        & 49.2             & 93.5                   & \textbf{99.3}     & 77.2              & 95.8             & 91.1             & 68.2                  & 18.6            & 9.8                 & 66.6                  \\
C. LoRA-WEMoE           & 46.3             & 84.5                   & 87.6              & 52.1              & 70.5             & 73.3             & 50.0                  & 18.7            & 10.9                & 56.5                  \\
OPCM                    & 61.8             & 95.4                   & 99.2              & 83.0              & \textbf{97.8}    & 90.9             & 86.0                  & 26.4            & 14.7                & 71.0                  \\
MINGLE                  & 67.7             & \textbf{96.0}          & 98.7              & 81.4              & 97.1             & 90.6             & 90.6                  & 60.7            & \textbf{88.6}       & 79.8                  \\
\textbf{MADE-IT (Ours)} & \textbf{74.7}    & 95.6                   & 98.4              & \textbf{86.9}     & 97.4             & \textbf{94.7}    & 92.9                  & 78.4            & 80.8                & \textbf{79.9}         \\ \hline
\end{tabular}
\end{table*}

\subsection{Additional Results of Expert Evolution}

We provide detailed visualizations of the dynamic expert evolution for the ViT-B/32 and ViT-B/16 architectures in Figure~\ref{apfig:expert_chain}. Consistent with the evolution trajectories analyzed for the ViT-L/14 architecture in the main text, the visualizations across all architectures demonstrate a clear generic-to-specific hierarchical transition. Specifically, in modules located at shallower network depths, the allocation maps are dominated by consistent color blocks, indicating that diverse tasks frequently share a compact set of universal experts. This phenomenon confirms that our manifold-aware strategy effectively identifies high geometric affinity among the principal subspaces of experts in early layers, thereby promoting collaborative sharing. Conversely, as network depth increases, the visualizations exhibit increasingly divergent color variations. This reflects a shift towards expert specialization, where the strategy isolates and retains distinct experts to accommodate highly decoupled, task-specific requirements. Such robust cross-architecture consistency reinforces our conclusion that MADE-IT effectively leverages the intrinsic functional hierarchy of deep neural networks to autonomously manage expert redundancy in complex model streams.

\begin{figure*}[htbp]
\centering
\subfloat[ViT-B/32]{%
    \includegraphics[width=1.0\linewidth]{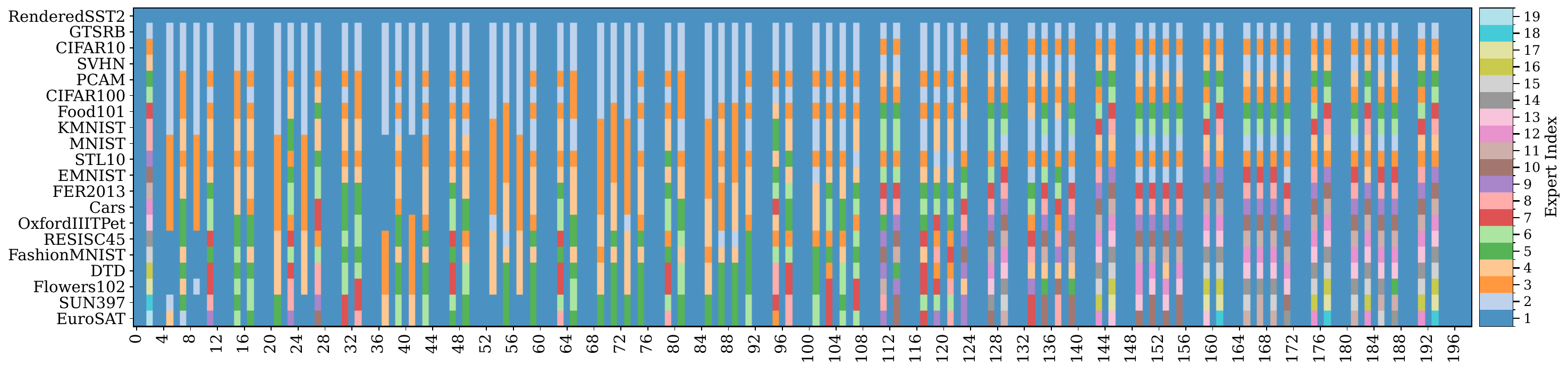}%
    \label{expert_chain:sub1}}\\
\subfloat[ViT-B/16]{%
    \includegraphics[width=1.0\linewidth]{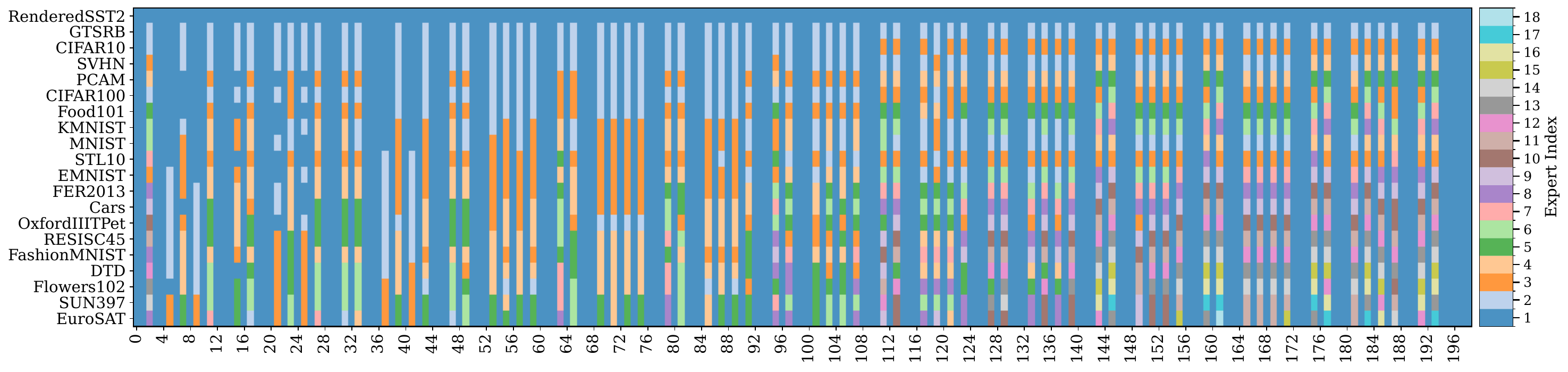}%
    \label{expert_chain:sub2}}\\
\subfloat[ViT-L/14]{%
    \includegraphics[width=1.0\linewidth]{Figures/Expert/ViT-L-14/1expert_chain_all_modules.pdf}%
    \label{expert_chain:sub3}}
\caption{Complete module-wise visualizations of dynamic expert evolution across 20 sequential tasks using the ViT-B/32, ViT-B/16, and ViT-L/14 architectures, respectively. In each visualization, consistent colors denote shared experts among tasks within the corresponding module.}
\label{apfig:expert_chain}
\end{figure*}

Complementing the analysis in the main text, Figure~\ref{ap:expert_analysis} provides the complete quantitative allocation breakdown across all evaluated architectures (ViT-B/32, ViT-B/16, and ViT-L/14). The consistent trends across varying model architectures and scales substantiate the robustness of our dynamic expert evolution strategy. From a component-wise perspective (Figures~\ref{ap:expert_analysis:sub1}-\ref{ap:expert_analysis:sub3}), MLP modules consistently exhibit the highest expert retention with an average of 11 to 12 experts, followed by Attention modules with approximately 8 to 9 experts. This reinforces the critical function of MLP modules in encoding task-specific knowledge. Conversely, other modules maintain significantly high reduction rates, validating their predominantly task-agnostic nature that facilitates extensive expert sharing. From a depth-wise perspective (Figures~\ref{ap:expert_analysis:sub4}-\ref{ap:expert_analysis:sub6}), the number of retained experts progressively increases with network depth, with reduction rates dropping from around 80\% in shallow layers to about 40\% in deeper ones. This validates the premise that earlier layers prioritize capturing shared, generalized representations, whereas deeper layers demand specialized functional pathways to satisfy task-specific requirements.
\begin{figure*}[htbp]
\centering
\subfloat[]{%
    \includegraphics[width=0.33\linewidth]{Figures/Expert/ViT-B-32/1expert_statistics_comparison.pdf}%
    \label{ap:expert_analysis:sub1}}
\hfill
\subfloat[]{%
    \includegraphics[width=0.33\linewidth]{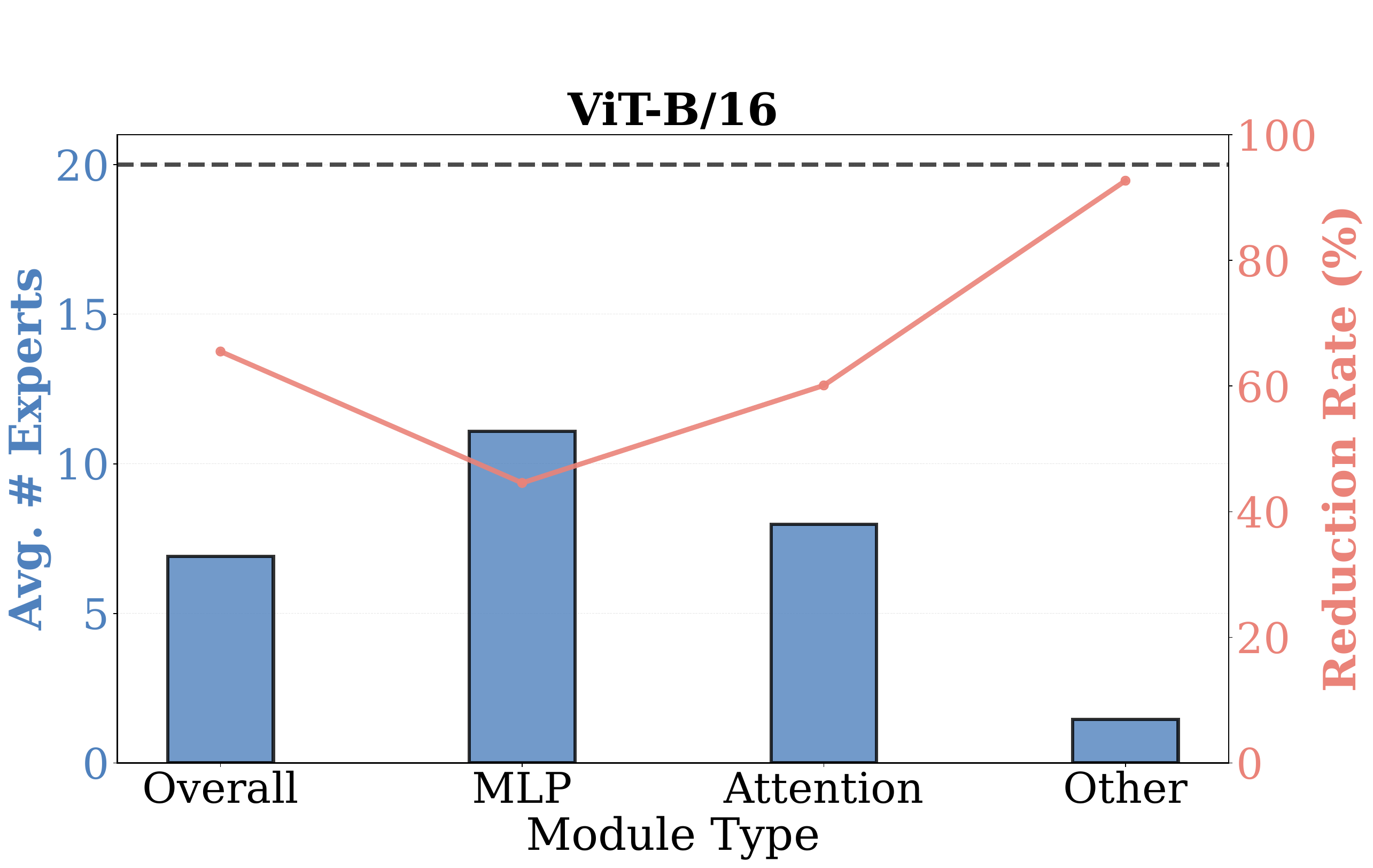}%
    \label{ap:expert_analysis:sub2}}
\hfill
\subfloat[]{%
    \includegraphics[width=0.33\linewidth]{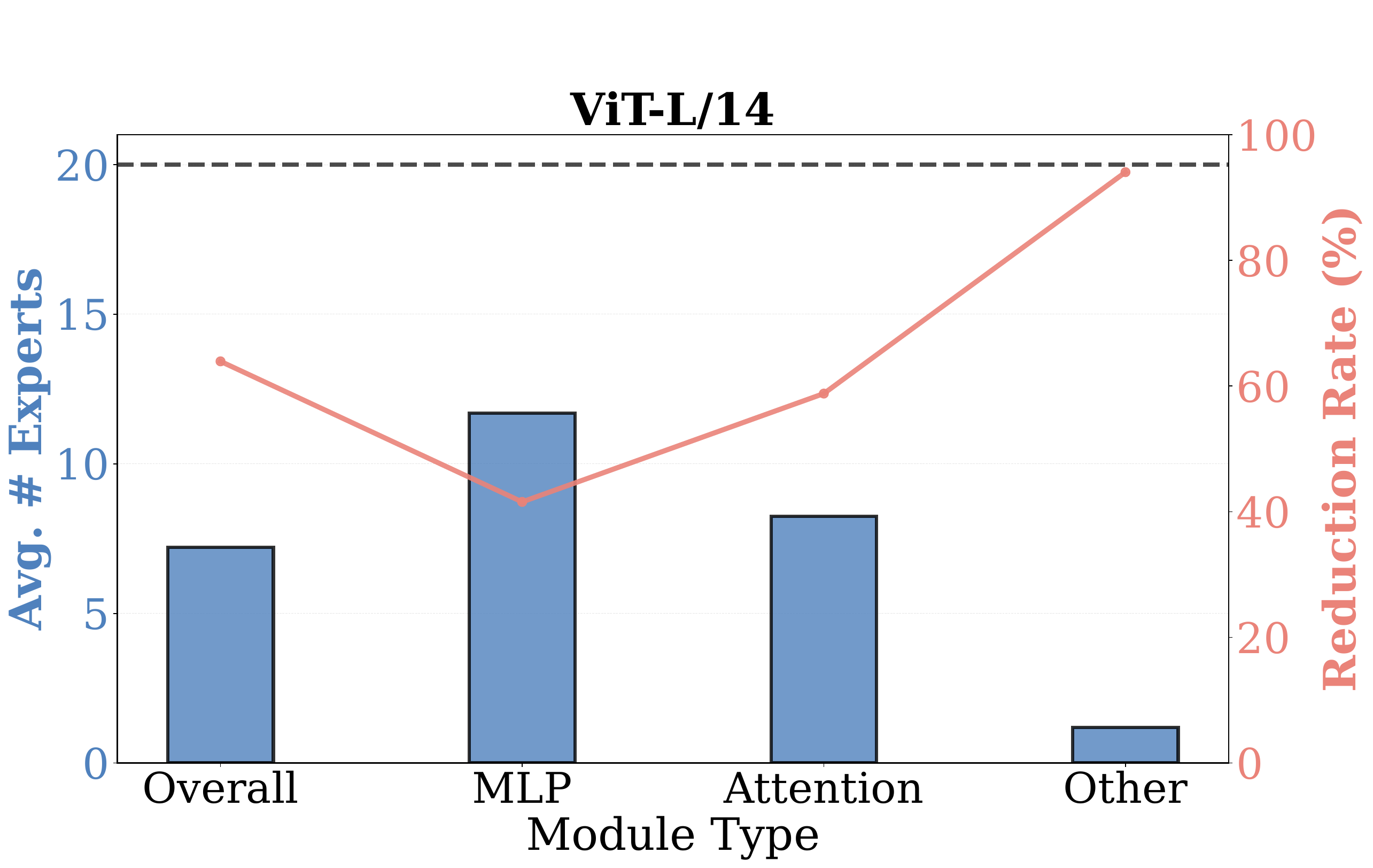}%
    \label{ap:expert_analysis:sub3}}\\[1ex]
\subfloat[]{%
    \includegraphics[width=0.33\linewidth]{Figures/Expert/ViT-B-32/1expert_layer_statistics_comparison.pdf}%
    \label{ap:expert_analysis:sub4}}
\hfill
\subfloat[]{%
    \includegraphics[width=0.33\linewidth]{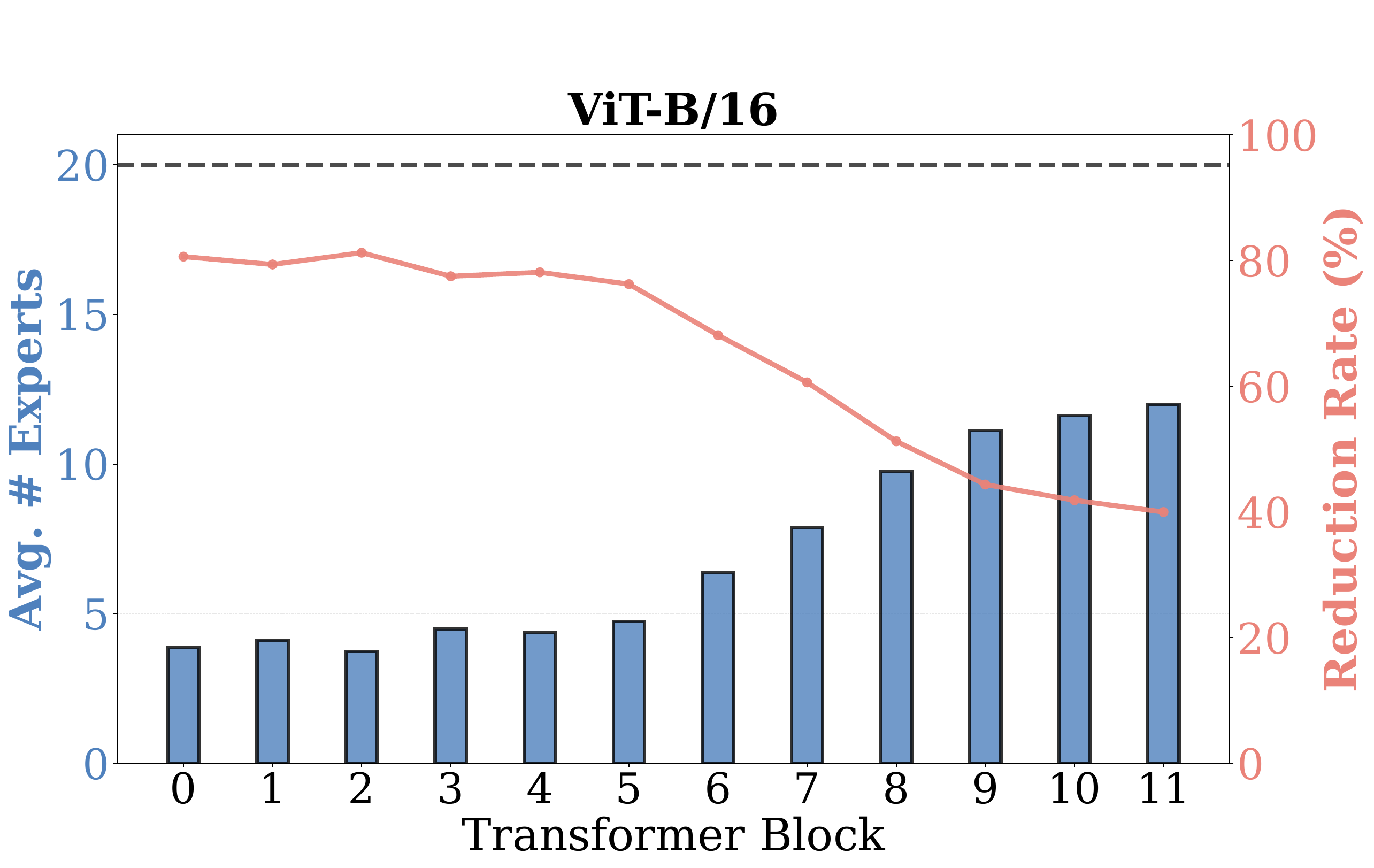}%
    \label{ap:expert_analysis:sub5}}
\hfill
\subfloat[]{%
    \includegraphics[width=0.33\linewidth]{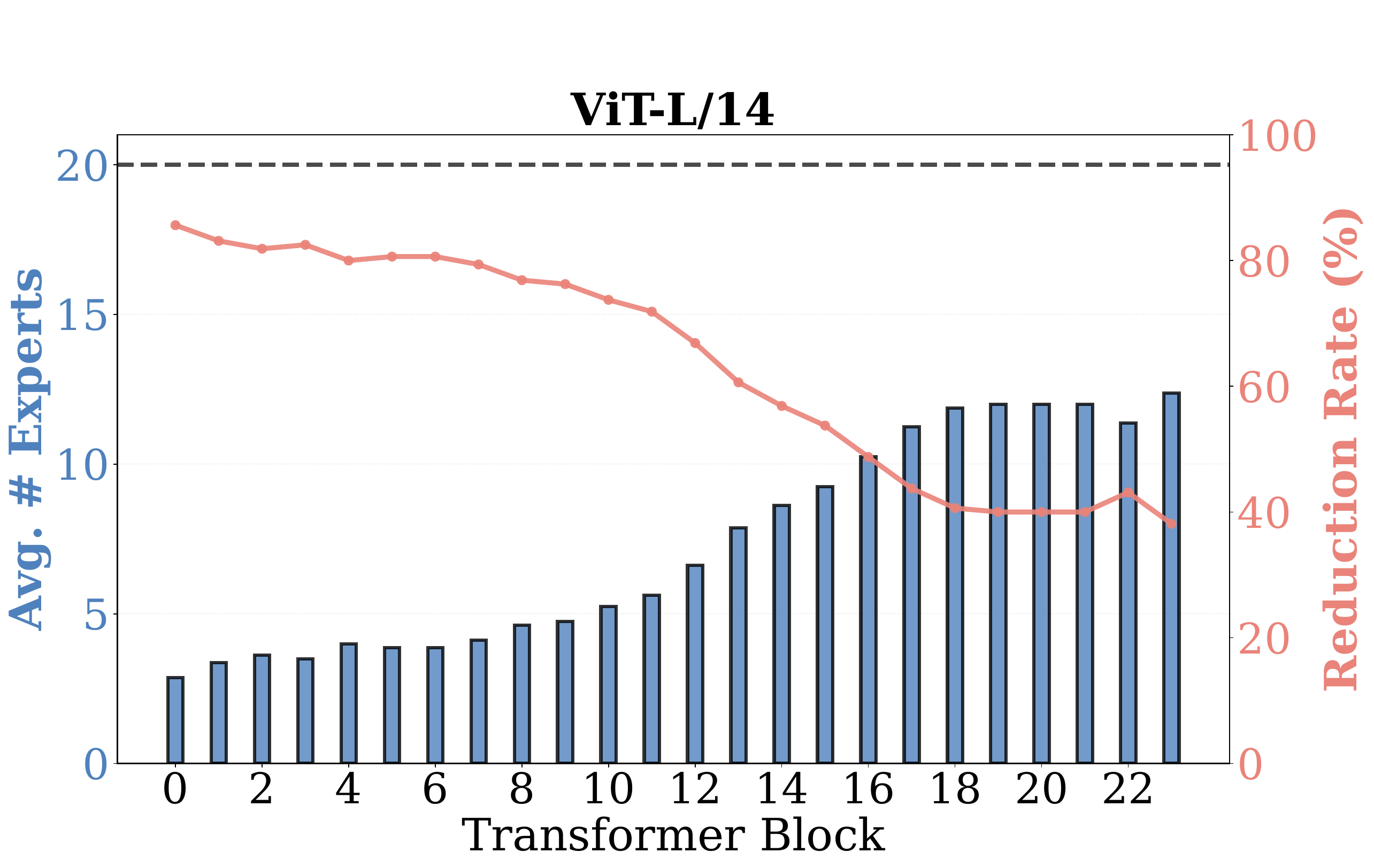}%
    \label{ap:expert_analysis:sub6}}
\caption{Comprehensive quantitative analysis of dynamic expert evolution across components and network depths for all evaluated architectures. The top row (a-c) shows the average number of retained experts and reduction rates by module type (MLP, Attention, Other). The bottom row (d-f) illustrates the layer-wise distribution across Transformer blocks.} 

\label{ap:expert_analysis}
\end{figure*}

\subsection{Additional Results of Density Distributions}
To decipher the mechanism underlying the performance gains of our subspace affinity, we provide comprehensive density distributions of pairwise similarity scores for the ViT-B/32, ViT-B/16, and ViT-L/14 architectures in Figure~\ref{apfig:Metric_Distribution}. Consistent with the ViT-B/32 observations discussed in the main text, the comparative results across different architectures and model scales demonstrate a clear concentration phenomenon when utilizing cosine similarity, where similarity scores cluster tightly around zero. This consistent collapse is fundamentally attributed to the inherent orthogonality of high-dimensional parameter spaces. Such concentration renders parameter-space metrics incapable of distinguishing the subtle differences among task-specific updates. In contrast, our projection-based subspace affinity consistently exhibits a much smoother distribution with a broader dynamic range across all evaluated architectures. This enhanced geometric discriminability provides critical informational signals that enable the effective and robust quantification of intrinsic inter-expert correlations necessary for dynamic expert evolution.

\begin{figure*}[htbp]
\centering
\subfloat[]{%
    \includegraphics[width=0.33\linewidth]{Figures/simi/ViT-B-32/1similarity_histogram.pdf}%
    \label{Metric_Distribution:sub1}}
\hfil
\subfloat[]{%
    \includegraphics[width=0.33\linewidth]{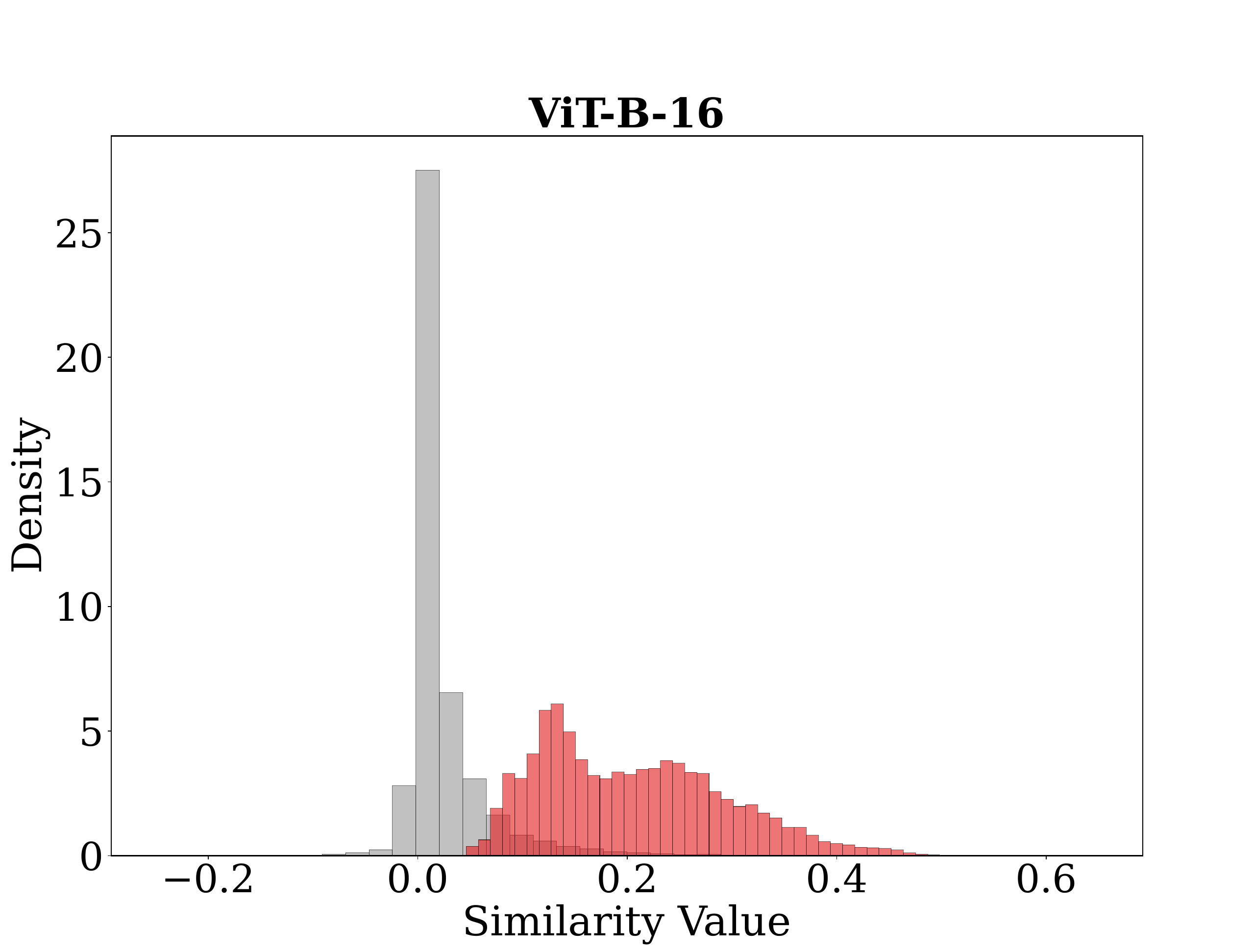}%
    \label{Metric_Distribution:sub2}}
\hfil
\subfloat[]{%
    \includegraphics[width=0.33\linewidth]{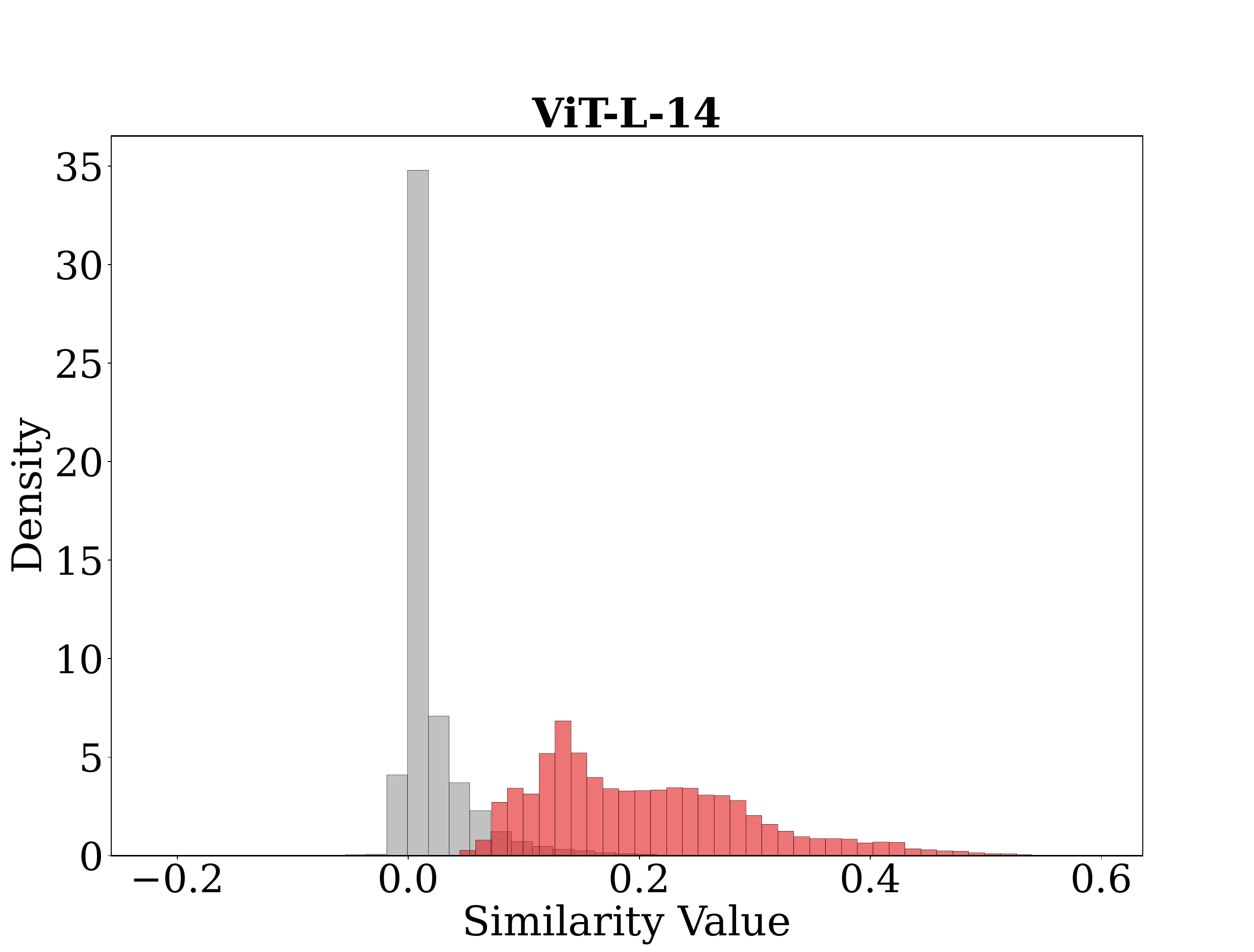}%
    \label{Metric_Distribution:sub3}}
\caption{Density distribution of pairwise similarity scores across 20 task-specific models.}
\label{apfig:Metric_Distribution}
\end{figure*}

\subsection{Additional Results of Hyperparameter Sensitivity}
\paragraph{Sensitivity to Rank Ratio $\rho$:}
We investigate the influence of the truncation rank ratio $\rho \in \{0.1, 0.2, 0.4, 0.6, 0.8, 1.0\}$ across all architectures in Figure~\ref{apfig:hyperR}. Consistent with the ViT-L/14 observations in the main text, both average accuracy and backward transfer curves across the ViT-B/32 and ViT-B/16 architectures exhibit a characteristic performance plateau within the $\rho \leq 0.6$ regime, followed by a sharp degradation as $\rho$ approaches $1.0$. This pattern corroborates the low-rank prior of fine-tuning updates, indicating that a minimal set of principal parameters is sufficient to represent task-specific functionalities with high fidelity. Moreover, the sharp performance degradation at higher ratios highlights the necessity of compact subspaces for our implicit routing mechanism. While higher rank ratios theoretically preserve more information, they invariably introduce substantial task-irrelevant noise. As these expert subspaces gradually expand to approximate the ambient full parameter space, feature projections become increasingly indistinguishable. This degeneration obscures the discriminability required by our implicit routing mechanism, generating misleading routing decisions and exacerbating catastrophic forgetting. Consequently, a compact $\rho=0.1$ not only optimizes parameter efficiency through low-rank truncation but also preserves the geometric specificity of experts via denoising, ensuring precise expert activation and superior multi-task performance.

\begin{figure*}[htbp]
\centering
\subfloat[]{%
    \includegraphics[width=0.33\linewidth]{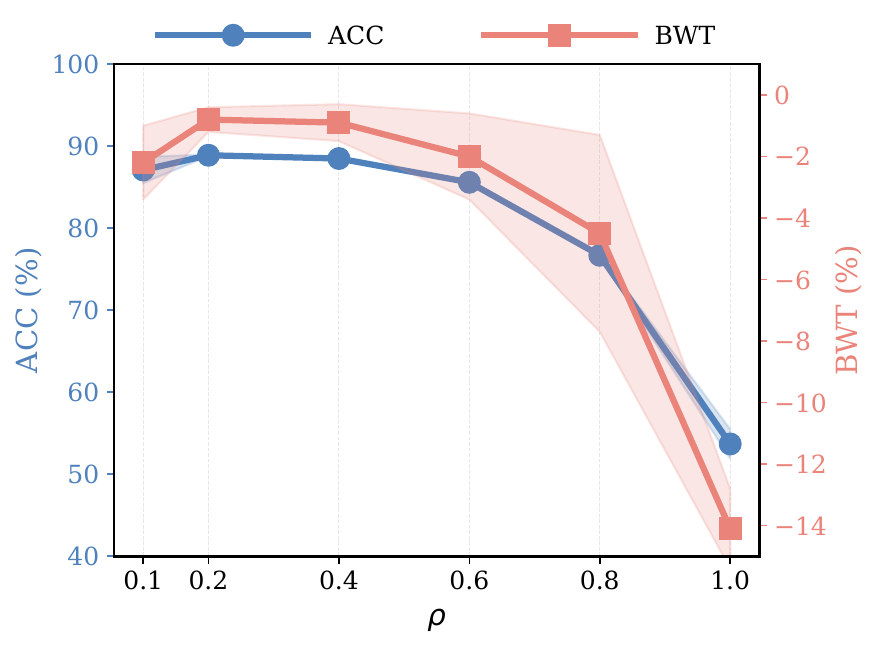}%
    \label{hyperR:sub1}}
\hfil
\subfloat[]{%
    \includegraphics[width=0.33\linewidth]{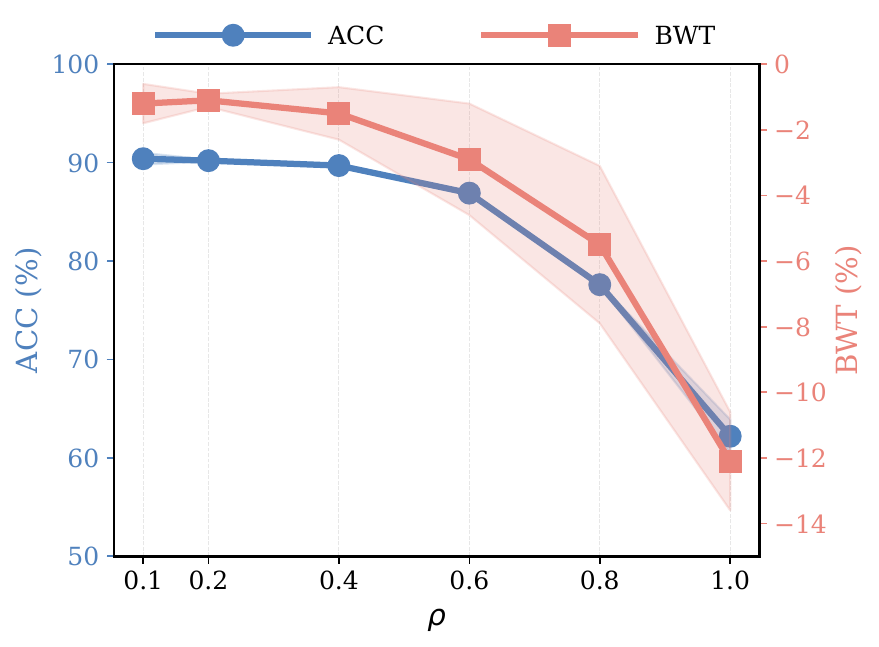}%
    \label{hyperR:sub2}}
\hfil
\subfloat[]{%
    \includegraphics[width=0.33\linewidth]{Figures/hyperR/ViT-L-14_rank_ratio_ablation.pdf}%
    \label{hyperR:sub3}}
\caption{Sensitivity analysis of the rank ratio $\rho$ across the 8-task setting on (a) ViT-B/32, (b) ViT-B/16, and (c) ViT-L/14. Shaded areas denote the standard deviation.}
\label{apfig:hyperR}
\end{figure*}

\paragraph{Sensitivity to Margin Coefficient $\beta$:}
We investigate the influence of the margin coefficient $\beta \in \{-1.5, -1.0, -0.5, 0.0, 0.5, 1.0, 1.5, 2.0\}$ across all architectures in Figure~\ref{apfig:hyperK}. Consistent with the ViT-L/14 observations in the main text, both average accuracy and backward transfer scores across the ViT-B/32 and ViT-B/16 models exhibit a monotonic upward trend as $\beta$ increases, particularly in the regime $\beta < 0.5$. This indicates that raising the adaptive threshold relaxes the creation criterion, forming diverse expert ensembles capable of capturing distinct task-specific features. Notably, a characteristic performance plateau emerges within the interval $\beta \in [1.0, 2.0]$, where further elevating the threshold yields diminishing performance gains while unnecessarily inflating the overall parameter footprint due to redundant experts. Across all architectures, the performance inflection point consistently aligns near $\beta=1.0$. Therefore, we adopt $\beta=1.0$ as the default configuration, as it strikes an optimal balance between task-sensitive representational fidelity and architectural parsimony, ensuring robust performance across varying model scales and task sequences.

\begin{figure*}[htbp]
\centering
\subfloat[]{%
    \includegraphics[width=0.33\linewidth]{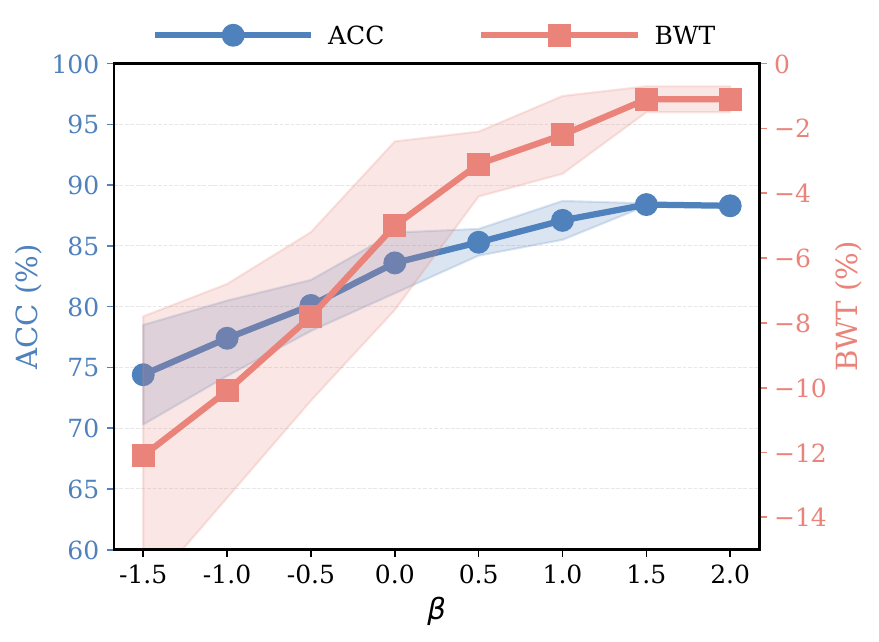}%
    \label{hyperK:sub1}}
\hfil
\subfloat[]{%
    \includegraphics[width=0.33\linewidth]{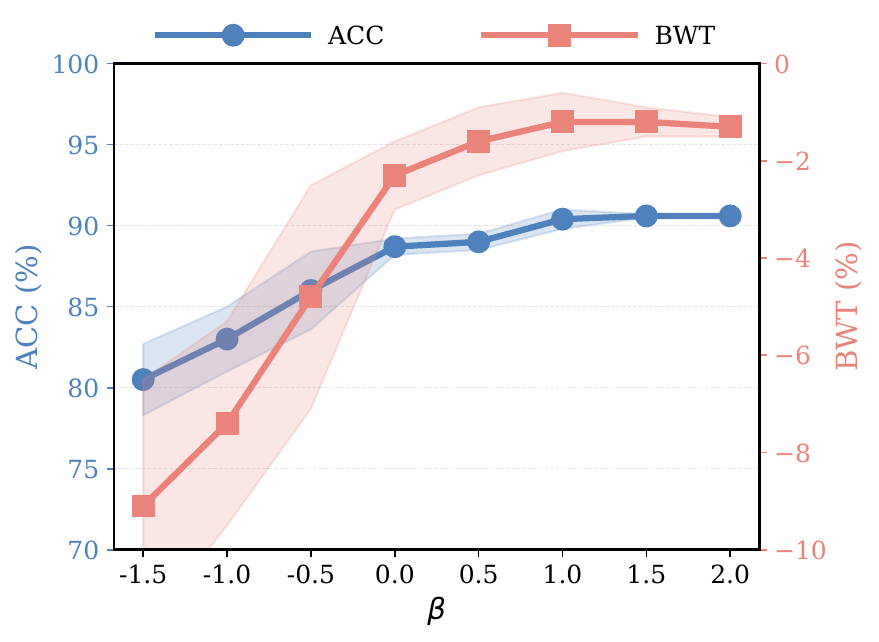}%
    \label{hyperK:sub2}}
\hfil
\subfloat[]{%
    \includegraphics[width=0.33\linewidth]{Figures/hyperK/ViT-L-14_hyperK_ablation.pdf}%
    \label{hyperK:sub3}}
\caption{Sensitivity analysis of the margin coefficient $\beta$ across the 8-task setting on (a) ViT-B/32, (b) ViT-B/16 models, and (c) ViT-L/14. Shaded areas denote the standard deviation.}
\label{apfig:hyperK}
\end{figure*}



\end{document}